\def\Figref#1{Figure~\ref{#1}}
\def\secref#1{Section~\ref{#1}}
\def\Secref#1{Section~\ref{#1}}
\def\Apxref#1{Appendix~\ref{#1}}
\def\eqref#1{equation~(\ref{#1})}
\def\Eqref#1{Eq.~(\ref{#1})}
\def\Algref#1{Algorithm~\ref{#1}}
\def\Factref#1{Fact~\ref{#1}}
\def\Propref#1{Proposition~\ref{#1}}
\def\Defref#1{Definition~\ref{#1}}
\def\Tableref#1{Table~\ref{#1}}
\def\1{\bm{1}}
\DeclareMathAlphabet{\mathsfit}{\encodingdefault}{\sfdefault}{m}{sl}
\SetMathAlphabet{\mathsfit}{bold}{\encodingdefault}{\sfdefault}{bx}{n}
\def\gJ{{\mathcal{J}}}
\def\gL{{\mathcal{L}}}
\def\gQ{{\mathcal{Q}}}
\newcommand{\E}{\mathbb{E}}
\DeclareMathOperator*{\argmax}{arg\,max}
\newtheorem{lemma}{Lemma}
\newtheorem{definition}{Definition}
\newtheorem{fact}{Fact}
\newtheorem{prop}{Proposition}
\newcommand{\shortn}{\textup{\texttt{-}}}
\newcommand{\shortp}{\textup{\texttt{+}}}
\newcommand{\ie}{\textit{i}.\textit{e}.}
\newcommand{\mdp}{DRMDP}
\newcommand{\cond}{PI}
\newcommand{\task}{sum-form}
\title{Off-Policy Reinforcement Learning with \\ Delayed Rewards}
\author{%
  Beining Han\textsuperscript{1},  Zhizhou Ren\textsuperscript{2},  Zuofan Wu\textsuperscript{1},   Yuan Zhou\textsuperscript{2},  Jian Peng\textsuperscript{2} \\
  Tsinghua University\textsuperscript{1},  University of Illinois at Urbana-Champaign\textsuperscript{2} \\
  \texttt{bouldinghan@gmail.com} \\

}
\begin{document}

\maketitle

\begin{abstract}
We study deep reinforcement learning (RL) algorithms with delayed rewards. In many real-world tasks, instant rewards are often not readily accessible or even defined immediately after the agent performs actions. In this work, we first formally define the environment with delayed rewards and discuss the challenges raised due to the non-Markovian nature of such environments. Then, we introduce a general off-policy RL framework with a new $\mathcal{Q}$-function formulation that can handle the delayed rewards with theoretical convergence guarantees. For practical tasks with high dimensional state spaces, we further introduce the \textsc{HC}-decomposition rule of the $\mathcal{Q}$-function in our framework which naturally leads to an approximation scheme that helps boost the training efficiency and stability. We finally conduct extensive experiments to demonstrate the superior performance of our algorithms over the existing work and their variants.
\end{abstract}

\section{Introduction} \label{sec:intro}

Deep reinforcement learning (RL) aims at maximizing the cumulative reward of a MDP. To apply RL algorithms, the reward has to be given at every state-action pair in general (\ie, $r(s, a)$). With a good and high quality reward function, RL can achieve remarkable performance, e.g. AlphaGo Zero for Go \citep{alphagozero2017}, DQN\citep{mnih2015human} for Atari, SAC \citep{haarnoja2018softapp} for robot control e.t.c. Recently, RL has been applied in many other real world settings beyond games and locomotion control. This includes industrial process control \citep{industrial2017},  traffic optimization \citep{traffic2019, fleet2018Efficient}, molecular design \citep{molecular2017} and resource allocation \citep{2018dispatch}. However, in many of these real-world scenarios, Markovian and instant per-step rewards are hard or impossible to obtain or even clearly defined. In practice, it becomes more reasonable to use a delayed reward of several consecutive steps as feedback. For example, in traffic congestion reduction \cite{traffic2019}, the amount of decreased congestion for a single traffic light switch is hard to define in practice while it is more adequate to use the average routing time consumed for the vehicles as the feedback. The latter is a delayed reward which can only be obtained after the vehicles have passed the congestion (long after a single switch). In molecular design, only the molecule formed after the final operation can provide a meaningful evaluation for the whole process \citep{molecular2017}. In locomotion control where the feedback is generated by the interaction with the large environment, it is usually the case that the frequency of the rewards generated from the environment's sensors is much lower than the frequency of the robot control, thus the rewards are given only every so often.

Despite the importance and prevalence of the delayed reward setting, very few previous research in RL has been focusing on problems with non-Markovian properties and delayed rewards. Besides, current RL algorithms lack theoretical guarantee under non-Markovian rewards and perform unsatisfactorily in practice \citep{Tanmay2020ircr}. Thus, in this paper, our goal is to better understand the properties of RL with delayed rewards and introduce a new algorithm that can handle the delayed reward both theoretically and practically. A key to our approach lies in the definition of the past-invariant delayed reward MDPs. Based on this definition, theoretical properties and algorithmic implications are discussed to motivate the design of a new practical algorithm, which explicitly decompose the value function into components of both historical (H) and current (C) step information. We also propose a number of ways to approximate such HC decomposition, which can be readily incorporated into existing off-policy learning algorithms. Experiments demonstrate that such approximation can improve training efficiency and robustness when rewards are delayed.

\section{Problem Formulation} \label{sec:problem}

In order to characterize the delayed and non-Markov reward signals, we introduce the Delayed Reward Markov Decision Process (\mdp). In this work, we focus on \mdp~that satisfies the \emph{Past-Invariant (\cond)} condition, which is satisfied in many real-world settings with delayed reward and will be explicitly defined in the following subsection.

\subsection{Past-Invariant Delayed Reward MDPs}

In~\mdp, the transition of the environment is still Markovian and the agent can observe and interact with the environment instantly. However, rewards may be non-Markovian and are delayed and observed only once every few steps. More specifically, the time steps are divided into consecutive \emph{signal intervals} of random lengths, and the reward signal generated during a signal interval may depend on the a period of state-action sequence and is observed only at the end of the interval. We formally define \mdp~as follows.
\begin{definition}[\mdp] \label{def:mdp}
A Delayed Reward Markov Decision Process $M = (S, A, p, q_n, r, \gamma)$ is described by the following parameters.  
\begin{enumerate}[leftmargin=0.2in, itemsep=-3pt, topsep=-3pt, partopsep=-10pt]
\item The state and action spaces are $S$ and $A$ respectively.
\item The Markov transition function is $p(s'|s, a)$ for each $(s, a) \in S\times A$;  the initial state distribution is  $p(s_0)$.
\item The signal interval length is distributed according to $q_n(\cdot)$, \ie, for the $i$-th signal interval, its length $n_i$ is  independently drawn from $q_n(\cdot)$.
\item The reward function $r$ defines the expected reward generated for each signal interval; suppose $\tau_i = \tau_{t:t+n_i}=(s, a)_{t: t+n_i} = ((s_t, a_t), \dots (s_{t+n-1}, a_{t+n_i-1}))$ is the state-action sequence during the $i$-th signal interval of length $n_i$, then the expected reward for this interval is $r(\tau_i)$.
\item The reward discount factor is $\gamma$.
\end{enumerate}
\end{definition}

In this work, we focus on the infinite-horizon \mdp. We use $(\tau, n) = (\tau_i, n_i)_{i\in\{1,2,3,\dots\}}$ to denote a trajectory, where $\tau_i$ is the state-action sequence during the $i$-th signal interval and $n_i$ is the corresponding length. We also let $t_i$ be first time step of the $i$-th signal interval, \ie, $t_i = \sum_{j=1}^{i - 1} n_j$. Note that the reward $r(\tau_i)$ is revealed at time $t_i + n_i - 1 = t_{i+1}-1$. We finally define the discounted cumulative reward of the trajectory $(\tau, n)$ by $R(\tau, n) := \sum_{i=1}^{\infty} \gamma^{t_{i+1}-1} r(\tau_i)$. The objective for \mdp~is to learn a policy $\pi$ that maximized the expected discounted cumulative reward $\mathcal{J}(\pi) := \E_{(\tau, n) \sim \pi}[R(\tau, n)]$. \footnote{We use $\mathcal{J}$ to distinguish from the reward of the ordinary MDPs which we denote by $J$.}

Because of the non-Markovian nature of \mdp, the Markov policy class $\{\pi = \pi(a_t|s_t)\}$ might not achieve satisfactory expected discounted reward. In this work, we consider two more general policy classes $\Pi_s = \{\pi = \pi(a_t|s_t, t-t_i)\}$ and $\Pi_{\tau} = \{\pi=\pi(a_t|\tau_{t_i:t} \circ s_t)\}$, where in both classes, $i = i(t)$ is the index of the signal interval that $t$ belongs to. Note that the first policy class $\Pi_s$ resembles the traditional type of policy $\pi(a_t|s_t)$, but is augmented with an extra parameter indicting the relative index of the current time step in the signal interval. The second policy class $\Pi_\tau$ takes all previous steps in the current signal interval into consideration to decide the action. Regarding the power of the two policy classes, we first observe the following fact.

\begin{restatable}{fact}{policyproperty} \label{fact:policy}
For any \mdp, there exists an optimal policy $\pi^* \in \Pi_\tau$. However, there exists some \mdp~such that all of its optimal policies $\pi^* \notin \Pi_s$.
\end{restatable}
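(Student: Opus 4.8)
The plan is to prove the two halves separately. For the positive statement I would recast the \mdp~as an \emph{equivalent time-homogeneous discounted MDP} whose state records the within-interval history, so that the standard existence theorem for stationary optimal policies applies directly; a stationary policy on this augmented state is precisely an element of $\Pi_\tau$. For the impossibility statement I would exhibit a small explicit \mdp~in which the interval reward depends on information that is visible at the start of an interval but is erased from the current state before the interval ends, forcing any optimal policy to carry memory that $\Pi_s$ cannot.

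For the first part, define for each $t$ in signal interval $i$ the augmented state $x_t := \tau_{t_i:t}\circ s_t$, i.e. the state--action sequence observed since the interval began together with the current state; note its length encodes the relative position $t-t_i$. I would first verify that $(x_t)$ is a time-homogeneous Markov chain under any policy acting on $x_t$: given $x_t$ and $a_t$, the next state is drawn from $p(\cdot\mid s_t,a_t)$, and whether the interval terminates at step $t$ is an independent Bernoulli event whose probability is the hazard rate of $q_n$ at position $t-t_i$, a function of $x_t$ alone. On termination the reward $r(\tau_i)$ — a function of the completed history and final action — is emitted and the next augmented state resets to the bare new state; otherwise the history is extended by appending $(s_t,a_t)$ and recording $s_{t+1}$. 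Since each transition advances absolute time by exactly one step, the objective $\mathcal{J}(\pi)=\E[\sum_i \gamma^{t_{i+1}-1}r(\tau_i)]$ is literally the $\gamma$-discounted return of this augmented MDP, the reward of interval $i$ being collected at step $t_{i+1}-1$. Hence absolute time enters only through the per-step discount and need not be fed to the policy, while cross-interval history is irrelevant because resets wipe it out. Standard discounted-MDP theory then yields a stationary optimal policy, i.e. a map from augmented states $x_t$ to actions, which is exactly a member of $\Pi_\tau$, giving $\pi^*\in\Pi_\tau$.

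For the second part I would take deterministic interval length $n\equiv 2$, action set $A=\{0,1\}$, and a reset/initial distribution that is uniform on two distinct states $x_1,x_2$ independently of the policy. I arrange transitions so that from either $x_1$ or $x_2$ every action leads to a common state $z$ at the interval's second step, and set $r(\tau)=1$ when the interval's first state is $x_1$ and $a_1=0$, or is $x_2$ and $a_1=1$, and $r(\tau)=0$ otherwise. A policy in $\Pi_\tau$ reads the first state off $\tau_{t_i:t}$ and matches $a_1$ to it, earning reward $1$ every interval, which by the first part is optimal. A policy in $\Pi_s$ sees only $(z,1)$ at the second step regardless of the interval's origin, so $a_1$ is a fixed distribution and the expected per-interval reward is at most $1/2$ (in fact exactly $1/2$, since $a_0$ is irrelevant and the first state is uniform). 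The optimal value is therefore unattainable inside $\Pi_s$, so every optimal policy lies outside $\Pi_s$.

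The main obstacle is the first part: rigorously justifying that the augmented process is a genuine time-homogeneous MDP and that the absolute-time discount collapses to a single per-step factor $\gamma$, so that dropping both absolute time and all prior-interval history from the policy input loses no optimality. Correctly handling the random interval length through the hazard formulation, ensuring the reward is a legitimate state--action reward of the augmented chain, and confirming that a stationary optimum actually exists (which requires mild regularity, e.g. finite spaces or bounded rewards) is where the care is needed; the counterexample is then routine.
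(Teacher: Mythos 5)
Your proposal is correct, and its second half is essentially the paper's own counterexample: the paper takes intervals of fixed length $2$, initial states $A_0,A_1$ chosen uniformly, both funnelling into a common state $B$, and reward $r(a_i,b_j)=i\oplus j$, which is the same ``match the final action to the forgotten first state'' construction as yours (your version adds a uniform reset so the process is genuinely infinite-horizon, while the paper's terminates after one interval; the difference is immaterial). The genuine difference is in the first half. The paper does \emph{not} reduce to a standard MDP: it works directly in the class $\Pi_{\text{all}}$ of fully history-dependent policies, \emph{assumes} an optimal policy exists there, and repeatedly applies a ``policy switch'' operation --- whenever two trajectory prefixes end in the same within-interval suffix $\tau_{t_i:t}$, the behavior following one prefix is overwritten by the behavior following the other --- proving a lemma that each switch preserves $\gJ(\pi)$ (the lemma uses optimality of $\pi$ to equate the continuation values $\gQ^{\pi}$ after the two prefixes, together with the Markov dynamics and the interval-local reward), and then inducting over $t$ to strip away all cross-interval dependence. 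Your augmented-state reduction buys two things the paper's argument does not: existence of an optimal policy becomes a \emph{conclusion} of standard discounted-MDP theory rather than a standing assumption (under the regularity conditions you correctly flag), and the within-interval clock, the i.i.d.\ structure of $q_n$ (via the hazard rate, which is a function of the length of $x_t$ precisely because interval lengths are drawn independently), and the interval-end reward are packaged into one time-homogeneous model --- which also makes facts like the Bellman fixed-point property of $\gQ^{\pi}$ (\Factref{fact:fixpoint}) transparent. What the paper's surgery argument buys is self-containedness (no appeal to existence or measurable-selection theorems) and a statement that directly covers the general $c$-step-overlap rewards of \Apxref{Apx:proof}; your reduction handles that case as well, but you would need to enlarge the augmented state to $\tau_{t_i-c:t}\circ s_t$.
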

\Factref{fact:policy} indicates that general \mdp s are relatively hard to solve as the agents have to search in quite a large policy space. In this work, we focus on the \mdp~with the \emph{Past-Invariant} property described as follows.

\begin{definition}[\cond-\mdp] \label{def:pimdp}
A Past-Invariant Delayed Reward Markov Decision Process is a \mdp~ $M = (S, A, p, q_n, r, \gamma)$ whose reward function $r$ satisfies the following Past-Invariant (\cond) condition:  for any two trajectory segments $\tau_1$ and $\tau_2$ of the same length, and for any two equal-length trajectory segments $\tau_1'$ and $\tau_2'$ such that the concatenated trajectories $\tau_a \circ \tau_b'$ are feasible under the transition dynamics $p$ for all $a, b \in \{1, 2\}$, it holds that
\begin{align*}
r(\tau_1 \circ \tau_1') > r(\tau_1 \circ \tau_2') \iff r(\tau_2 \circ \tau_1') > r(\tau_2 \circ \tau_2') .
\end{align*}
\end{definition}

Roughly speaking, in \cond-\mdp, the relative credit for different actions at each $s_t$ only depends on the experience in the future and is invariant of the past. This property may relieve the agent from considering the past experience for decision making. Unfortunately, in theory, there exists some bizarre reward design that still requires the agent to take  history into consideration in terms of the optimal policy. We simplify this problem in our work by only searching for policy in $\Pi_s$. In \Apxref{Apx:proof}, we will discuss this issue in detail. From now on, we will only discuss the polices in $\Pi_s$ without further specification.

A simple example of $r$ with \cond~condition is $r(\tau_{t:t+n}) = \sum_{i=t}^{t+n-1} \hat{r}(s_i, a_i)$, where $\hat{r}$ is a per-step reward function. This kind of tasks is studied in many previous work \citep{Zheng2018lirpg, Oh2018sil, Martin2020GCN}. We refer this kind of reward functions as the \task \footnote{ Optimal policies for \task~\cond-\mdp~have the form $\pi(a_t|s_t)$.}.

\textbf{General Reward Function.} In \Defref{def:mdp}, we define the reward as a function of the state-action sequence of its signal interval. In general, we may allow reward functions with longer input which overlap with the previous signal intervals, e.g., maximal overlapping of $c$ steps
\begin{align*}
    r(\tau_{t_i-c:t_i+n_i}).
\end{align*}
The theoretical analysis in \secref{sec:methodtab} and the empirical method in \secref{sec:hc} can be directly extended to this general reward function. We provide detailed discussions in \Apxref{Apx:proof} for the general definition while we only consider $c=0$ in the main text for the simplicity of the exposition.

\subsection{Off-Policy RL in \cond-\mdp} \label{sec:off-policy}

Deep Reinforcement Learning has achieved great success in solving high-dimensional MDPs problems, among which the off-policy actor-critic algorithms SAC \citep{haarnoja2018sac} and TD3 \citep{fujimoto2018td3} are the most widely used ones. However, since the rewards are delayed and non-Markovian in \cond-\mdp, directly applying these SOTA off-policy RL algorithms faces many challenges that degrade the learning performance. In this subsection, we briefly discuss the problems that arise in critic learning based on TD3 and  similar problems also exist for SAC. 

First, in \cond-\mdp, value evaluation with off-policy samples brings in off-policy bias. TD3 minimizes over $\phi$ w.r.t.
\begin{align} \label{equ:q_ori_mse}
    L_\phi = \E_{D}\left[(R_t + \gamma \hat{Q}_{\phi}(s_{t+1}, a'_{t+1}) - Q_{\phi}(s_t, a_t))^2 \right],
\end{align}
where $(s_t, a_t, R_t, s_{t+1})$ is sampled from the replay buffer $D$, $R_t = r(\tau_{i})$ if $t = t_{i+1} - 1$ and $i = i(t)$ is the index of the reward interval that $t$ belongs to, and $R_t = 0$ otherwise. $\hat{Q}_{\phi}$ represents the target value of the next state and $a'_{t+1}$ is sampled from the smoothed version of the policy $\pi$.  Furthermore, in practical implementation, samples in the replay buffer $D$ are not sampled from a single behavior policy $\beta$ as in traditional off-policy RL \citep{sutton2018reinforcement}. Instead, the behavior policy changes as the policy gets updated. Thus, we assume that samples are collected from a sequence of behavior policies $\bm \beta = \{\beta_k\}_{k=1}^{K}$. 

In the delayed reward setting, since the reward $r(\tau_i)$ depends on the trajectory of the whole signal interval (\ie, $\tau_i$) instead of a single step, the function $Q_{\phi}(s_t, a_t)$ learned via \Eqref{equ:q_ori_mse} will also have to depend on the trajectory in the signal interval upto time $t$ (\ie, $\tau_{t_i:t}$) rather than the single state-action pair at time $t$. Since the samples are collected under a sequence of behavior polices $\bm \beta$, different behavior policy employed at state $s_t$ may lead to different distribution over $\tau_{t_i:t}$ in \Eqref{equ:q_ori_mse}. Consequently, $Q_{\phi}(s_t, a_t)$ will be affected by this discrepancy and may fail to assign the accurate expected reward for the current policy. Please refer to \Apxref{Apx:proof} for detailed discussions.

Second, in addition to the issue resulted from off-policy samples, $Q_{\phi}(s_t, a_t)$ learned in \Eqref{equ:q_ori_mse} with on-policy samples in $D$ may still be problematic. We formally state this problem via a simple \task~\cond-\mdp~in \Apxref{Apx:proof} whose optimal policy is in $\Pi_s$. In this example, the fix point of \Eqref{equ:q_ori_mse} fails to assign the actual credit and thus misleads policy iteration even when the pre-update policy is already the optimal. We refer this problem as the fixed point bias.

Last but not least, the critic learning via \Eqref{equ:q_ori_mse} would suffer a relative large variance. Since $R_t$ varies between $0$ and $r(\tau_i)$, minimization of TD-error ($L_{\phi}$) with a mini-batch of data has large variance. As a result, the approximated critic will be  noisy and relative unstable. This will further effect the policy gradients in \Eqref{equ:pi_ori}. 
\begin{align} \label{equ:pi_ori}
    \nabla_{\theta}J(\pi_{\theta}) = \E_D \left[\nabla_{a_t} Q_{\phi}(s_t, a_t) \Big|_{\pi_{\theta}(s_t)} \nabla_{\theta} \pi_{\theta}(s_t) \right]
\end{align}
To sum up, directly applying SOTA off-policy algorithms in \cond-\mdp~will suffer from multiple problems (off-policy bias, fixed point bias, large traning noise, etc).  Indeed, these problems result in a severe decline in performance even in the simplest task when $n=5$ \citep{Tanmay2020ircr}. To address these problems, in the next section, we propose a new algorithm for \cond-\mdp~tasks with theoretical justification.

\section{Method} \label{sec:method}

In this section, we first propose a novel definition of the $\mathcal{Q}$-function (in contrast to the original $Q$-function) and accordingly design a new off-policy RL algorithm for \cond-\mdp~ tasks. This method has better theoretical guarantees in both critic learning and policy update. We then further introduce a \emph{\textsc{HC}-decomposition framework} for the proposed $\mathcal{Q}$-function, which leads to easier optimization and better learning stability in practice.


\subsection{The New $\mathcal{Q}$-function and its Theoretical Guarantees} \label{sec:methodtab}

Since the non-Markov rewards make the original definition of $Q$-function ambiguous, we instead define the following new $\mathcal{Q}$-function for \cond-\mdp~tasks.
\begin{align} \label{equ:q_tau}
    \mathcal{Q}^{\pi}(\tau_{t_i:t+1}) := \E_{(\tau, n) \sim \pi}\left[\sum_{j=i}^\infty \gamma^{t_{j+1}-t-1} r(\tau_{j}) \Big| \tau_{t_i:t+1}  \right],
\end{align}
The new $\mathcal{Q}$-function is defined over the trajectory segments $\tau_{t_i:t+1}$, including all previous steps in $s_t$'s signal interval. Besides, the expectation is taken over the distribution of the trajectory $(\tau, n)$ that is due to the randomness of the policy, the randomness of signal interval length and the transition dynamics. Despite the seemingly complex definition of $\mathcal{Q}$, in the following we provide a few of its nice properties that are useful for \cond-\mdp~ tasks.

First, we consider the following objective function
\begin{align} \label{equ:q_tau_mse}
    \mathcal{L}_{\phi} := \E_{D}\left[(R_{t} + \gamma \hat{\mathcal{Q}}_{\phi}(\tau_{t_j:t+2}) - \mathcal{Q}_{\phi}(\tau_{t_i:t+1}))^2 \right],
\end{align} 

where $\tau_{t_j:t+2}= \tau_{t_i:t+1} \circ (s_{t+1}, a'_{t+1})$ (so that $j = i$) if $t$ is not the last step of $\tau_i$, and  $\tau_{t_j:t+2} = (s_{t_{i+1}}, a'_{t_{i+1}})$ (so that $j = i + 1$) otherwise. Similarly to \Eqref{equ:q_ori_mse}, in \Eqref{equ:q_tau_mse}, $(\tau_{t_i:t+1}, R_{t}, s_{t+1})$ is also sampled from $D$ and $a_{t+1}'$ is sampled from $\pi$ correspondently. We may view \Eqref{equ:q_tau_mse} as an extension of \Eqref{equ:q_ori_mse} for $\mathcal{Q}^{\pi}$. However, with these new definitions, we are able to prove the follow fact.
\begin{restatable}{fact}{fixpoint} \label{fact:fixpoint}
For any distribution $D$ with non-zero measure for any $\tau_{t_i:t+1}$, $\mathcal{Q}^{\pi}(\tau_{t_i:t+1})$ is the unique fixed point of the MSE problem in \Eqref{equ:q_tau_mse}. More specifically, when fixing $\hat{\mathcal{Q}}_{\phi}$ as the corresponding $\mathcal{Q}^{\pi}$, the solution of the MSE problem is still $\mathcal{Q}^{\pi}$.
\end{restatable}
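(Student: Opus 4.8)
The plan is to reduce both parts of \Factref{fact:fixpoint} to a single Bellman-type self-consistency identity for $\mathcal{Q}^\pi$. I would first show that for every feasible segment $\tau_{t_i:t+1}$,
\[
\mathcal{Q}^\pi(\tau_{t_i:t+1}) = \E\!\left[R_t + \gamma\,\mathcal{Q}^\pi(\tau_{t_j:t+2}) \,\middle|\, \tau_{t_i:t+1}\right],
\]
where the expectation averages over the one-step continuation under $\pi$: the (policy-independent) interval-length draw from $q_n$ that decides whether $t$ ends the current interval, the transition $s_{t+1}\sim p(\cdot\,|\,s_t,a_t)$, and the freshly sampled action $a'_{t+1}\sim\pi$, and where $\tau_{t_j:t+2}$ is the continuation/restart segment exactly as defined in \Eqref{equ:q_tau_mse}.

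To derive the identity I would expand the definition \eqref{equ:q_tau} and split on whether $t$ is the last step of $\tau_i$. If $t$ is terminal, then $t_{i+1}=t+1$, the $j=i$ summand is $\gamma^{0}r(\tau_i)=R_t$, and the remaining tail, after factoring out one $\gamma$, is precisely $\mathcal{Q}^\pi$ evaluated on the freshly started segment $\tau_{t_{i+1}:t+2}=(s_{t+1},a'_{t+1})$. If $t$ is interior, then $R_t=0$ and the entire discounted sum, after factoring out one $\gamma$, equals $\mathcal{Q}^\pi$ on the extended segment $\tau_{t_i:t+2}=\tau_{t_i:t+1}\circ(s_{t+1},a'_{t+1})$. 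Using the tower property to replace the raw discounted return by its conditional expectation on the one-step-longer segment, and then averaging over the interval-end event, collapses the two cases into the displayed identity.

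For the second claim, fix $\hat{\mathcal{Q}}_\phi=\mathcal{Q}^\pi$ in \Eqref{equ:q_tau_mse}. Since $\mathcal{Q}_\phi$ is free to take an independent value at each segment, the squared error decouples across values of $\tau_{t_i:t+1}$ (each weighted by its positive $D$-mass), and the pointwise minimizer is the conditional mean of the target $R_t+\gamma\,\mathcal{Q}^\pi(\tau_{t_j:t+2})$. The crucial observation is that, conditioned on $\tau_{t_i:t+1}$, the law of this target under $D$ coincides with the on-policy one-step continuation: $p$ and $q_n$ are policy-independent, and the bootstrap action is resampled from $\pi$ rather than inherited from the behavior policies $\bm\beta$. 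Hence the conditional mean equals the right-hand side of the self-consistency identity, i.e.\ $\mathcal{Q}^\pi(\tau_{t_i:t+1})$, so $\mathcal{Q}^\pi$ reproduces itself.

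For uniqueness, I would introduce the operator $\mathcal{B}$ that maps $\mathcal{Q}$ to the segment-wise MSE minimizer, $(\mathcal{B}\mathcal{Q})(\tau_{t_i:t+1})=\E_D[\,R_t+\gamma\,\mathcal{Q}(\tau_{t_j:t+2})\mid\tau_{t_i:t+1}\,]$; its fixed points are exactly the fixed points of \Eqref{equ:q_tau_mse}, and here the hypothesis that every $\tau_{t_i:t+1}$ carries positive $D$-measure is what guarantees the minimizer is pinned down at every segment, so that ``fixed point of the MSE problem'' is well defined as a function. Because $(\mathcal{B}\mathcal{Q}_1-\mathcal{B}\mathcal{Q}_2)(\tau_{t_i:t+1})=\gamma\,\E_D[\mathcal{Q}_1(\tau_{t_j:t+2})-\mathcal{Q}_2(\tau_{t_j:t+2})\mid\tau_{t_i:t+1}]$, we obtain $\|\mathcal{B}\mathcal{Q}_1-\mathcal{B}\mathcal{Q}_2\|_\infty\le\gamma\,\|\mathcal{Q}_1-\mathcal{Q}_2\|_\infty$, a $\gamma$-contraction on bounded functions (finite since the rewards are bounded and $\gamma<1$); Banach's theorem then gives a unique fixed point, which by the first part is $\mathcal{Q}^\pi$. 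I expect the main obstacle to be the bookkeeping of the discount exponent $t_{j+1}-t-1$ across the interval boundary and the careful justification that conditioning on $\tau_{t_i:t+1}$ leaves the interval-end event and the future governed by the very dynamics defining $\mathcal{Q}^\pi$, so that the $D$-conditional expectation of the target legitimately matches the on-policy look-ahead; the contraction step is then routine.
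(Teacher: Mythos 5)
Your proposal is correct and follows essentially the same route as the paper's proof: establish the one-step Bellman self-consistency identity for $\mathcal{Q}^\pi$ by splitting on whether $t$ ends the signal interval, use the positive-mass assumption so the MSE decouples into pointwise conditional-mean problems whose solution is exactly that identity's right-hand side, and prove uniqueness via the $\gamma$-contraction of the resulting update operator in the sup norm. The only cosmetic difference is that the paper bounds the distance of the minimizer to $\mathcal{Q}^\pi$ directly rather than invoking Banach's theorem for a contraction between arbitrary pairs, which is an equivalent formulation of the same argument.
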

Please refer to \Apxref{Apx:proof} for the proof. \footnote{The definition of $\mathcal{Q}^{\pi}$ and \Factref{fact:fixpoint} also holds in \mdp~and for $\pi \in \Pi_{\tau}$.} By \Factref{fact:fixpoint}, $\mathcal{Q}^{\pi}$ can be found precisely via the minimization for tabular expressions. This helps to solve the problems in critic learning for \cond-\mdp.

We next introduce \Factref{fact:monotone} which is also proved in \Apxref{Apx:proof} and states that the order of $\mathcal{Q}^{\pi}$ w.r.t.~the actions at any state $s_t$ is invariant to the choice of $\tau_{t_i:t}$, thanks to the \cond~condition.
\begin{restatable}{fact}{monotone} \label{fact:monotone}
For any $\tau_{t_i:t}, \tau_{t_i:t}'$ and $s_t, a_t, a_t'$ which both $\tau_{t_i:t} \circ s_t$ and $\tau_{t_i:t}' \circ s_t$ are feasible under the transition dynamics, $\forall \pi \in \Pi_s$, we have that
\begin{align*}
    \mathcal{Q}^{\pi}(\tau_{t_i:t} \circ (s_t, a_t)) > \mathcal{Q}^{\pi}(\tau_{t_i:t} \circ (s_t, a_t')) 
    \iff \mathcal{Q}^{\pi}(\tau_{t_i:t}' \circ (s_t, a_t)) > \mathcal{Q}^{\pi}(\tau_{t_i:t}' \circ (s_t, a_t')).
\end{align*}
\end{restatable}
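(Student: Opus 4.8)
The plan is to split the new $\mathcal{Q}$-function at a segment $\tau_{t_i:t}\circ(s_t,a)$ into the contribution of the \emph{current} signal interval and the contribution of \emph{all future} intervals, and to argue that only the former can depend on the past segment. Write $h=\tau_{t_i:t}$ and $h'=\tau_{t_i:t}'$; by hypothesis these have the same length and are both followed by the same $s_t$, so the relative index $t-t_i$ is the same in both cases. Conditioning on $\tau_{t_i:t+1}=h\circ(s_t,a)$ and using the Bellman form behind \Eqref{equ:q_tau_mse}, I would decompose
\begin{align*}
\mathcal{Q}^{\pi}(h\circ(s_t,a)) = \E\!\left[\gamma^{\,t_{i+1}-t-1} r(\tau_i)\,\middle|\,h\circ(s_t,a)\right] + \E\!\left[\gamma^{\,t_{i+1}-t-1} W(s_{t_{i+1}})\,\middle|\,h\circ(s_t,a)\right],
\end{align*}
where $W(s_{t_{i+1}})=\E[\sum_{j\ge i+1}\gamma^{\,t_{j+1}-t_{i+1}}r(\tau_j)\mid s_{t_{i+1}}]$ is the value at the start of the next interval. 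Since $p$ is Markov and $\pi\in\Pi_s$ depends only on the current state and on $t-t_i$, the joint law of the remaining suffix $\sigma=\tau_{t+1:t_{i+1}}$ and of $s_{t_{i+1}}$ (hence of all future intervals) depends on $(s_t,a)$ and $t-t_i$ but \emph{not} on whether the past was $h$ or $h'$. The clean first step is therefore a lemma stating that the second term above is a function of $(s_t,a)$ alone, identical for $h$ and $h'$.

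It then remains to compare the current-interval terms. Integrating out the future, the first term equals $\E_{\sigma\sim P_{a}}[\gamma^{|\sigma|}\,r(h\circ(s_t,a)\circ\sigma)]$, where the continuation law $P_a$ over suffixes $\sigma$ does not depend on $h$. For each fixed $\sigma$, the suffixes $(s_t,a_t)\circ\sigma$ and $(s_t,a_t')\circ\sigma$ have equal length and share the initial state $s_t$, so the \cond~condition applied with prefixes $h,h'$ yields
\begin{align*}
r(h\circ(s_t,a_t)\circ\sigma) > r(h\circ(s_t,a_t')\circ\sigma) \iff r(h'\circ(s_t,a_t)\circ\sigma) > r(h'\circ(s_t,a_t')\circ\sigma),
\end{align*}
so the sign of each per-continuation reward gap is preserved under swapping the past. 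The goal is to lift this pointwise ordinal fact to the expectation and then recombine it with the common, past-independent future term.

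The hard part will be exactly this lifting step. The \cond~condition is purely ordinal: it pins down the \emph{sign} of each gap but not its magnitude, whereas $\mathcal{Q}^{\pi}$ mixes the gaps across continuations of different lengths (weighted by $\gamma^{|\sigma|}$) and then adds the action-dependent future value $\E_{P_a}[\gamma^{|\sigma|}W]$, which is common to $h$ and $h'$ but differs between $a_t$ and $a_t'$. A term-by-term sign argument alone does not control the weighted mixture once this additive term is present, so the crux is to rule out a reversal of the ordering after taking the expectation. I expect the natural route is to first carry out the argument for the structured reward class of interest: for the \task~reward $r(\tau)=\sum\hat r(s,a)$ the gap $r(h\circ(s_t,a_t)\circ\sigma)-r(h\circ(s_t,a_t')\circ\sigma)=\hat r(s_t,a_t)-\hat r(s_t,a_t')$ is not merely sign-invariant but entirely independent of $h$ and $\sigma$, so it factors out of the expectation and the comparison collapses to one past-independent quantity plus the shared future term, making the ordering transfer immediately. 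I would then isolate precisely which cardinal strengthening of the ordinal \cond~condition (a difference-invariance of the reward gaps, rather than only sign-invariance) is what lets the same conclusion be pushed through the expectation in the general case.
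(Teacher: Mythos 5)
Your decomposition is exactly the paper's. The paper's proof expands $\mathcal{Q}^{\pi}(\tau_{t_i:t}\circ(s_t,a_t))$, using the Markov dynamics and $\pi\in\Pi_s$, into an expected current-interval reward plus a residual $K(s_t,a_t,t_i)$ that does not depend on $\tau_{t_i:t}$ --- the same split into a current-interval term and a past-independent future term $W$ that you make, justified the same way (the continuation law is past-independent). The divergence is at the step you call the hard part: the paper dispatches it in a single sentence, asserting that since the PI condition makes the order of $r(\tau_{t_i:t_{i+1}})$ over continuations invariant to $\tau_{t_i:t}$, ``as a result'' the order of $\mathcal{Q}^{\pi}(\tau_{t_i:t}\circ(s_t,a))$ among actions $a$ is also invariant. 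That is precisely the ordinal-to-expectation lift you decline to take for granted; the paper offers no argument for it.

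Your caution is warranted: the lift is not valid under the ordinal PI condition alone, so this is a genuine hole in the paper's proof rather than an omitted routine detail. Concretely, take fixed interval length $n=3$; initial states $X$ or $Y$ (each with probability $1/2$), both moving under a single action $z$ to $S$; at $S$ two actions, where $a$ reaches $U_1$ or $U_2$ with probability $1/2$ each and $a'$ reaches $V$ deterministically; a single action $w$ at the last step; all rewards after the first interval are zero. Set
\begin{align*}
& r\big((X,z),(S,a),(U_1,w)\big)=10, \quad r\big((X,z),(S,a),(U_2,w)\big)=0, \quad r\big((X,z),(S,a'),(V,w)\big)=4,\\
& r\big((Y,z),(S,a),(U_1,w)\big)=5, \quad\;\;\, r\big((Y,z),(S,a),(U_2,w)\big)=0, \quad r\big((Y,z),(S,a'),(V,w)\big)=4.9 .
\end{align*}
Every applicable instance of the PI condition holds: for the length-$1$ prefixes $(X,z)$ and $(Y,z)$ the three feasible suffixes are ordered identically ($U_1$-suffix $>$ $V$-suffix $>$ $U_2$-suffix), and for length-$2$ prefixes the only comparison with all concatenations feasible is $U_1$ versus $U_2$, ordered identically. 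Yet for any $\pi\in\Pi_s$,
\begin{align*}
\mathcal{Q}^{\pi}\big((X,z)\circ(S,a)\big) &= 5\gamma \;>\; 4\gamma = \mathcal{Q}^{\pi}\big((X,z)\circ(S,a')\big),\\
\mathcal{Q}^{\pi}\big((Y,z)\circ(S,a)\big) &= 2.5\gamma \;<\; 4.9\gamma = \mathcal{Q}^{\pi}\big((Y,z)\circ(S,a')\big),
\end{align*}
so the conclusion of the fact fails: per-continuation sign-invariance does not survive averaging over random continuations, exactly as you suspected.

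So your proposal is incomplete, but not because you missed an idea the paper had --- the paper has no additional idea at this step. Moreover, the repairs you sketch are the right ones. For sum-form rewards the gap $\hat r(s_t,a_t)-\hat r(s_t,a_t')$ is constant across prefixes and continuations, so it factors out and the statement holds. More generally, your proposed cardinal strengthening (difference-invariance of reward gaps) is literally the paper's own ``Strong PI'' condition from the appendix, introduced there for a different purpose; under it $r(h\circ u)-r(h'\circ u)=\Delta(h,h')$ is independent of the suffix $u$, hence $\mathcal{Q}^{\pi}(h\circ(s_t,a))-\mathcal{Q}^{\pi}(h'\circ(s_t,a)) = \Delta(h,h')\cdot\E\left[\gamma^{\,n_i-(t-t_i)-1}\mid n_i>t-t_i\right]$ is the same constant for every action, and the ordering transfers. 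In short: your attempt matches the paper's proof up to its final sentence, correctly identifies that final sentence as unjustified, and identifies the hypotheses under which it can actually be justified.
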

\Factref{fact:monotone} ensures that the policy iteration on $\pi_k(a_t|s_{t}, t-t_i)$ with an arbitrary $\mathcal{Q}^{\pi}(\tau_{t_i:t} \circ (s_t, a_t))$ results in the same $\pi_{k+1} \in \Pi_s$. Thus, we are able to prove the convergence theorem for the policy iteration with the $\mathcal{Q}$-function. Below we state the theorem for tabular case (which is proved in \Apxref{Apx:proof}), while the theorem for the general continuous state space can be derived in the same way.
\begin{restatable}{prop}{pit}{\bf (Policy Improvement Theorem for \cond-DRMDP)} \label{prop:pit}
For any policy $\pi_k \in \Pi_s$, the policy iteration w.r.t.~$\mathcal{Q}^{\pi_k}$ produces policy $\pi_{k+1} \in \Pi_s$ such that $\forall \tau_{t_i:t+1}$, it holds that
\begin{align*}
    \mathcal{Q}^{\pi_{k+1}}(\tau_{t_i:t+1}) \geq \mathcal{Q}^{\pi_k}(\tau_{t_i:t+1}) ,
\end{align*}
which implies that $\mathcal{J}(\pi_{k+1}) \geq \mathcal{J}(\pi_k)$. 
\end{restatable}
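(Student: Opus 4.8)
The plan is to run the classical policy-improvement argument for tabular MDPs, invoking \Factref{fact:monotone} at the single place where the delayed-reward structure would otherwise break it. Let $\pi_{k+1}$ be the policy that is greedy with respect to $\mathcal{Q}^{\pi_k}$, that is, at each node $(s_t, t-t_i)$ it concentrates on actions maximizing $\mathcal{Q}^{\pi_k}(\tau_{t_i:t}\circ(s_t,\cdot))$. The first, and I expect the only genuinely delicate, step is to check that this object even lives in $\Pi_s$. In an ordinary MDP the greedy action depends only on the state, but here $\mathcal{Q}^{\pi_k}$ depends on the entire within-interval history $\tau_{t_i:t}$, so a priori the maximizing action at $(s_t,t-t_i)$ could vary with that history, and then no single $\Pi_s$-policy could be simultaneously greedy at every history. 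This is exactly what \Factref{fact:monotone} rules out: since the \cond~condition makes the order of $\mathcal{Q}^{\pi_k}$ over actions at $(s_t,t-t_i)$ identical for all feasible histories, the set of maximizers is history-independent, so $\pi_{k+1}$ is well defined as a function of $(s_t,t-t_i)$ alone and hence $\pi_{k+1}\in\Pi_s$.

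Granting this, the construction yields the one-step improvement inequality: for every feasible history $\tau_{t_i:t}$ and every state $s_t$,
\[
\E_{a\sim\pi_{k+1}}\!\big[\mathcal{Q}^{\pi_k}(\tau_{t_i:t}\circ(s_t,a))\big] \;\geq\; \E_{a\sim\pi_k}\!\big[\mathcal{Q}^{\pi_k}(\tau_{t_i:t}\circ(s_t,a))\big],
\]
and, crucially, this holds for \emph{all} histories at once because the common greedy action identified above is optimal under each of them.

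Next I would phrase the recursion of \Eqref{equ:q_tau_mse} as a Bellman operator $\mathcal{T}^{\pi}$ acting on $\mathcal{Q}$-functions, where the bootstrap action $a'_{t+1}$ is drawn from $\pi$; by \Factref{fact:fixpoint}, $\mathcal{Q}^{\pi}$ is its unique fixed point, and $\mathcal{T}^{\pi}$ is the usual monotone $\gamma$-contraction. Applying the one-step inequality at the bootstrap node $(s_{t+1},(t+1)-t_j)$ gives $\mathcal{T}^{\pi_{k+1}}\mathcal{Q}^{\pi_k}\geq\mathcal{T}^{\pi_k}\mathcal{Q}^{\pi_k}=\mathcal{Q}^{\pi_k}$. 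Iterating $\mathcal{T}^{\pi_{k+1}}$ and using monotonicity,
\[
\mathcal{Q}^{\pi_k}\;\leq\;\mathcal{T}^{\pi_{k+1}}\mathcal{Q}^{\pi_k}\;\leq\;(\mathcal{T}^{\pi_{k+1}})^2\mathcal{Q}^{\pi_k}\;\leq\;\cdots\;\longrightarrow\;\mathcal{Q}^{\pi_{k+1}},
\]
where the limit is the fixed point of the $\gamma$-contraction (bounded rewards and $\gamma<1$ kill the $\gamma^N$ tail). This gives $\mathcal{Q}^{\pi_{k+1}}(\tau_{t_i:t+1})\geq\mathcal{Q}^{\pi_k}(\tau_{t_i:t+1})$ for all $\tau_{t_i:t+1}$, as claimed.

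Finally, to deduce $\mathcal{J}(\pi_{k+1})\geq\mathcal{J}(\pi_k)$, I would write $\mathcal{J}(\pi)=\E_{s_0,\,a_0\sim\pi}[\mathcal{Q}^{\pi}((s_0,a_0))]$ (the first interval starts at $t_1=0$) and chain the two facts just proved: $\mathcal{J}(\pi_{k+1})=\E_{a_0\sim\pi_{k+1}}[\mathcal{Q}^{\pi_{k+1}}((s_0,a_0))]\geq\E_{a_0\sim\pi_{k+1}}[\mathcal{Q}^{\pi_k}((s_0,a_0))]\geq\E_{a_0\sim\pi_k}[\mathcal{Q}^{\pi_k}((s_0,a_0))]=\mathcal{J}(\pi_k)$, the first inequality from the pointwise domination and the second from the one-step greedy step at $a_0$. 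The one nonstandard ingredient throughout is \Factref{fact:monotone}; every other step is the standard contraction and telescoping machinery, so the bulk of the care goes into the $\Pi_s$ well-definedness argument in the first paragraph.
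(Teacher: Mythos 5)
Your proposal is correct and follows essentially the same route as the paper's proof: both use \Factref{fact:monotone} to make the greedy policy well defined within $\Pi_s$, both establish the one-step improvement inequality, and your operator-monotonicity-plus-contraction iteration is just a packaged version of the paper's "iteratively extend each $\mathcal{Q}^{\pi_k}$ term" unrolling, ending with the same chaining $\mathcal{J}(\pi_k) \leq \sum_{s_0,a_0} p(s_0)\pi_{k+1}(a_0|s_0,0)\mathcal{Q}^{\pi_k}(s_0,a_0) \leq \mathcal{J}(\pi_{k+1})$. The only cosmetic difference is that you explicitly invoke \Factref{fact:fixpoint} to identify the limit of the iteration, which the paper leaves implicit.
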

Thus, the off-policy value evaluation with \Eqref{equ:q_tau_mse} and the policy iteration in \Propref{prop:pit} guarantee that the objective function is properly optimized and the algorithm converges. Besides, for off-policy actor critic algorithms in the continuous environment, the policy gradient $\nabla_{\theta}\mathcal{J}(\pi_{\theta})$ is changed to  
\begin{align} \label{equ:pi_tau}
     \nabla_{\theta} \gJ(\pi_{\theta}) = \E_D \left[\nabla_{a_t} \mathcal{Q}^{\pi}(\tau_{t_i:t} \circ ( s_t, a_t)) \Big|_{\pi_{\theta}(s_t,t-t_i)} \nabla_{\theta} \pi_\theta(s_t, t-t_i) \right],
\end{align}
where $(\tau_{t_i:t}, s_t)$ is sampled from $D$. The full algorithm is summarized in \Algref{alg:general}. As a straightforward implementation, we approximate $\mathcal{Q}^{\pi}(\tau_{t_i:t+1})$ with a GRU  network \citep{cho2014gru} and test it on the continuous \cond-\mdp~ tasks. As shown in \Figref{fig:imp}, our prototype algorithm already outperforms the original counterparts (\ie,  SAC) on many tasks. Please refer to \Apxref{Apx:imp} for more details.

\subsection{The \textsc{HC}-Decomposition Framework} \label{sec:hc}

One challenge raised by the $\mathcal{Q}$-function is that it takes a relatively long sequence of states and actions as input. Directly approximating the $\mathcal{Q}$-function via complex neural networks would suffer from lower learning speed (e.g., recurrent network may suffer from vanishing or exploding gradients \citep{goodfellow2016deep}) and computational inefficiency. Furthermore, the inaccurate estimation of $\mathcal{Q}^{\pi}(\tau_{t_i:t+1})$ will result in inaccurate and unstable gradient in \Eqref{equ:pi_tau} which degrades the learning performance.



To improve the practical performance of our algorithm for high dimensional tasks, in this subsection, we propose the \emph{\textsc{HC}-decomposition framework} (abbreviation of History-Current) that decouples the approximation task for the \emph{current step} from the \emph{historical trajectory}. More specifically, we introduce $H_\phi$ and $C_\phi$ functions and require that 
\begin{align}\label{eq:BQ-decoupling}
   \mathcal{Q}_{\phi}(\tau_{t_i:t+1}) = H_{\phi}(\tau_{t_i:t}) + C_{\phi}(s_t, a_t) ,
\end{align}
Here, we use $C_\phi(s_t, a_t)$ to approximate the part of the contribution to $\mathcal{Q}_\phi$ made by the \emph{current step}, and use $H_\phi(\tau_{t_i:t})$ to approximate the rest part that is due to the \emph{historical trajectory} in the signal interval. The key motivation here is that, thanks to the Markov transition dynamics, the current step $(s_t, a_t)$ has more influence than the past steps on the value of the future trajectories under the a given policy $\pi$. Therefore, in \Eqref{eq:BQ-decoupling}, we use $C_\phi$ to highlight this part of the influence. In this way, we believe the architecture becomes easier to optimize, especially for $C_{\phi}$ whose input is a single state-action pair.

Moreover, in this way, we also find that the policy gradient will only depend on $C_\phi(s_t, a_t)$. Indeed, we calculate that the policy gradient $\nabla_{\theta}\mathcal{J}^{\text{HC}}(\pi_{\theta})$ equals to
\begin{align} \label{equ:pi_bq}
    \E_D \left[\nabla_{a_t} C_{\phi}(s_t, a_t) \Big|_{\pi_{\theta}(s_t, t-t_i)} \nabla_{\theta} \pi_{\theta}(s_t, t-t_i) \right],
\end{align}

\begin{figure*}[t]
\begin{center}
  \subfigure[Performance on High-Dimensional Tasks]{ 
    \label{fig:imp} 
    \includegraphics[width=5.6in, height=1.6in]{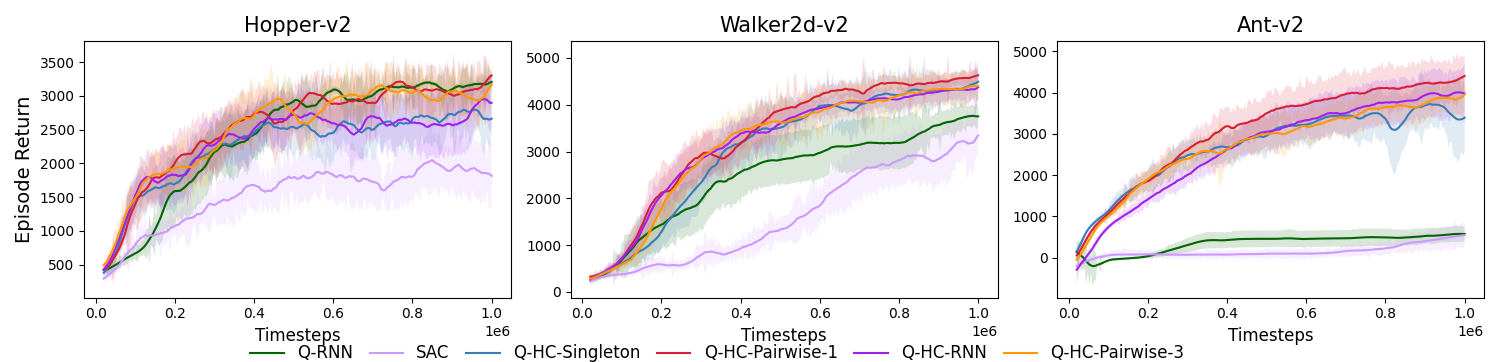} 
  }
  \vskip -0.1in
  \subfigure[Ablation on Regulation]{ 
    \label{fig:ablate} 
    \includegraphics[width=1.9in, height=1.45in]{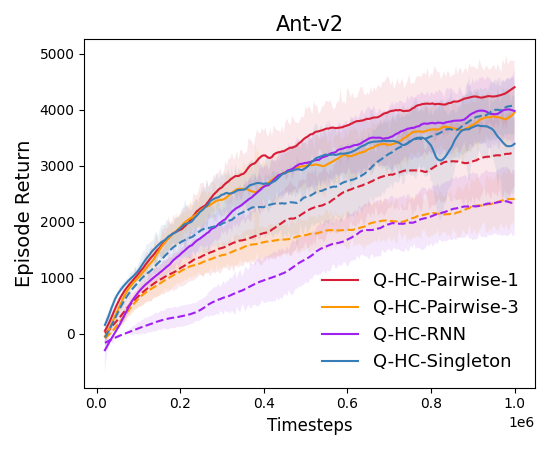} 
  } 
  \subfigure[Gradient Variance]{ 
    \label{fig:grad} 
    \includegraphics[width=2in, height=1.5in]{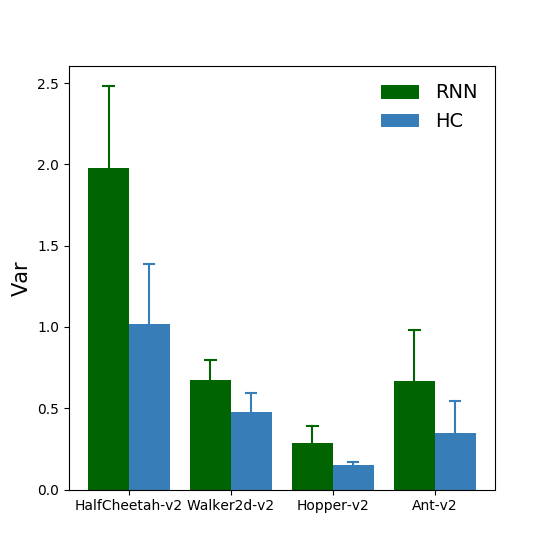} 
  } 
  \vskip -0.1in
\caption{\textbf{(a)}: Performance of different implementations of HC-decomposition framework ($\gQ$-HC), $\gQ$-RNN without HC-decomposition and vanilla SAC. The task is the continuous \task~\cond-\mdp~task. Reward function is the sum of standard per-step reward in these tasks. $q_n$ is a uniform distribution between 15 and 20. The results show the mean and standard deviation of 7 seeds each. \textbf{(b)} Ablation on the $L_{\text{reg}}$. Each dashed line is the no-regulation version of the corresponding $\gQ$-HC of the same color. \textbf{(c)} Estimated variance of the policy gradient throughout the training. Details are shown in \Apxref{Apx:imp}. }
\end{center}
\vskip -0.2in
\end{figure*}

Comparing \Eqref{equ:pi_bq} with \Eqref{equ:pi_tau}, we note that the gradient on the policy $\pi$ under \textsc{HC}-decomposition will not be effected by $\tau_{t_i:t}$. Thus, for a mini-batch update, our policy gradient has less variance and the training becomes more efficient. In \Figref{fig:grad}, we visualize and compare the scale of the gradient variance of our \textsc{HC}-decomposition framework and the straightforward recurrent network approximation of the $\mathcal{Q}$ function, w.r.t.~the same policy and the same batch of samples. Clearly, the result supports our claim that \textsc{HC}-decomposition results in less gradient variance.

Finally, to learn the $H_\phi$ and $C_\phi$ functions, we minimize the following objective function
\begin{align} \label{equ:implement}
    \mathcal{L}^{\textsc{HC}}_{\phi} = & \E_D[(R_t + \gamma (\hat{H}_{\phi}(\tau_{t_i:t+1}) + \hat{C}_{\phi}(s_{t+1}, a_{t+1}')) - (H_{\phi}(\tau_{t_i:t}) + C_{\phi}(s_t, a_t)))^2 ] + \lambda L_{\text{reg}}(H_{\phi}) ,
\end{align}
where the first term is the TD error in \Eqref{equ:q_tau_mse} and the second term is a regularizer on $H_{\phi}$. We use the regularization to stabilize the optimization process and prevent $H_{\phi}$ from taking away too much information of the credit on the choice of $a_t$. In practice, we may use different designs of $H_{\phi}$ and the regularization term based on domain knowledge or tasks' properties. In \secref{sec:exp}, we will compare several simple choices of $H_{\phi}$ and $L_{\text{reg}}$. The search for other designs of $H_{\phi}$ and the regularization term in various settings is left as future work.

\section{Experiment} \label{sec:exp}
In this section, we first test and illustrate our algorithmic framework and HC-decomposition on high-dimensional \cond-\mdp~tasks to validate our claims in previous sections. Then, we compare our algorithm with several previous baselines on \task~delayed reward tasks. Our algorithm turns out to be the SOTA algorithm which is most sample-efficient and stable. Finally, we demonstrate our algorithm via an illustrative example.
\subsection{Design Evaluation} \label{sec:implement}
We implement the following architectures of $H_{\phi}$ in our experiment.

\begin{figure*} 
\begin{center}
\centerline{\includegraphics[width=5.7in, height=3in]{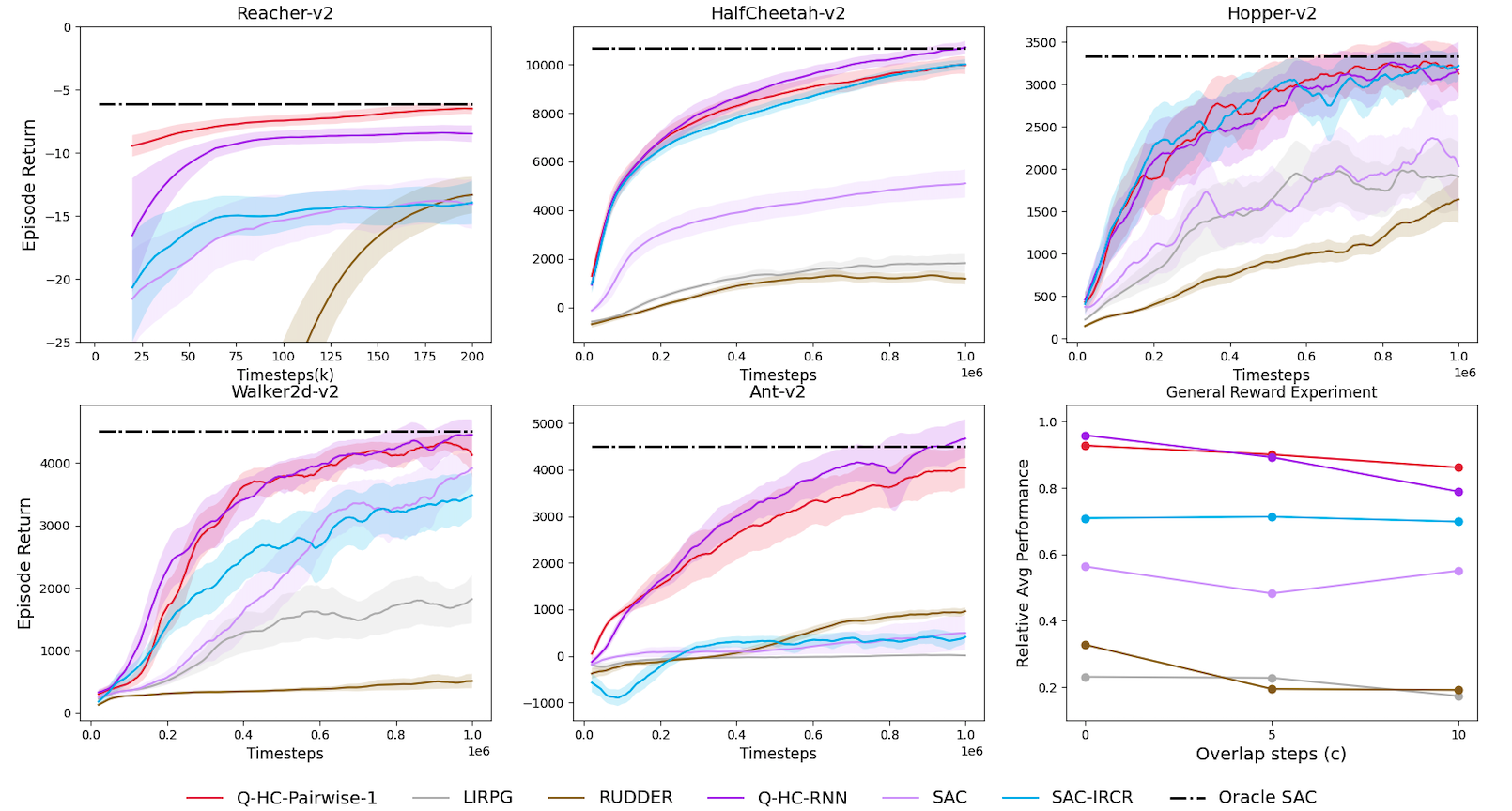}}
\caption{Comparisons of $\gQ$-HC-Pairwise-1, $\gQ$-HC-RNN and baselines on 5 \task~\cond-\mdp~ tasks. The dashed line is the value of Oracle SAC. $q_n$ is fixed as $20$ \citep{Zheng2018lirpg, Oh2018sil}. The results show the mean and the standard deviation of 7 runs. All curves are further smoothed equally for clarity. The last picture shows the relative performance w.r.t the Oracle SAC on \task~\cond~\mdp tasks with General Reward Function ( \Apxref{Apx:proof}). Each data point shows the average of 5 tasks for each algorithm. X-axis refers to the overlapping steps of the reward function. Please refer to \Apxref{Apx:imp} for more details.}
\label{fig:base}
\end{center}
\vskip -0.3in
\end{figure*}

\textbf{$\gQ$-HC-RNN}. $H_{\phi}(\tau_{t_i:t})$ is approximated with a GRU network.

\textbf{$\gQ$-HC-Pairwise-K}. $H_{\phi}(\tau_{t_i:t})$ is the sum of $K$\shortp$1$ networks $\{c^{k}_{\phi}\}_{k =0, 1, \ldots K}$ for different pairwise terms as follows\footnote{Input for $c^0_{\phi}$ is thus a single state-action pair.}. $K=1, 3$ are included in the experiment.
\begin{align*}
    H_{\phi}(\tau_{t_i:t}) = & \sum_{k \in [0:K+1]} \sum_{j \in [t_i:t-k]} c_{\phi}^k(\tau_{j:j+1} \circ \tau_{j+k:j+k+1}) .
\end{align*}
\textbf{$\gQ$-HC-Singleton}. $H_{\phi}(\tau_{t_i:t})$ is the sum of single step terms.
\begin{align*}
    H_{\phi}(\tau_{t_i:t}) = \sum_{j \in [t_i:t]} b_{\phi}(\tau_{j:j+1}) .
\end{align*}
In terms of the representation power of these structures, (-RNN) is larger than (-Pairwise)  larger than (-Singleton). With a small representation power, critic's learning  will suffer from large projection error. In terms of optimization, (-Singleton) is easier than (-Pairwise) and easier than (-RNN). As discussed in \secref{sec:hc}, non-optimized critic will result in inaccurate and unstable policy gradients. The practical design of $H_{\phi}$ is a trade-off between the efficiency of optimization and the scale of projection error. 
For the regularizer in \Eqref{equ:implement}, we choose the following for all implementations.
\begin{align} \label{equ:regular}
    L_{\text{reg}}(H_{\phi}) = \E_D \left[ (H_{\phi}(\tau_i) - r(\tau_i) )^2 \right],
\end{align}
where $\tau_i$ is sampled from the $D$. As discussed in \secref{sec:hc}, the history part $H_{\phi}(\tau_{t_i:t})$ should only be used to infer the credit of $\tau_{t_i:t}$ within the signal interval $\tau_i$. However, without the regularization, the approximated $H_{\phi}$ may deprive too much information from $\mathcal{Q}_{\phi}$ than we have expected. Thus, in a direct manner, we regularize $H_{\phi}(\tau_i)$ to the same value of $r(\tau_i)$. 

We compare these algorithms on \task~\cond-\mdp~continuous control tasks based on OpenAI Gym. Rewards are given once every $n$ steps which is uniformly distributed from $15$ to $20$. The reward is the sum of the standard per-step reward in these tasks. We also compare the above HC implementations with vanilla SAC and $\gQ$-RNN whose $\gQ^{\pi}$ is approximated by a GRU network. All algorithms are based on SAC and we observe they outperform their TD3 variants in practice. Please refer to \Apxref{Apx:imp} for implementation details.

As shown in \Figref{fig:imp}, the empirical results validate our claims.
\begin{enumerate}[leftmargin=0.2in, itemsep=-1pt, topsep=-2pt, partopsep=-10pt]
    \item Algorithms under the algorithmic framework (i.e., with prefix $\gQ$) outperform vanilla SAC. Our proposed method approximate $\gQ^{\pi}$ in the algorithmic framework while vanilla SAC cannot handle non-Markovian rewards. 
    \item As tasks become more difficult, $\gQ^{\pi}$ approximated with HC-decomposition architecture (i.e., with prefix $\gQ$-HC) is more sample-efficient and more stable than $\gQ$-RNN. \Figref{fig:grad} demonstrates that HC-decomposition yields low gradient variance in policy updates as discussed in \secref{sec:hc}.
    \item \Figref{fig:ablate} shows the importance of regularization, especially when $H_{\phi}$ has complex form (i.e., $\gQ$-HC-RNN) and thus more likely to result in ill-optimized $C$-part.
    \item We also find that $\gQ$-HC-Pairwise-1 slightly outperforms other implementations in this environment. This indicates that it balances well between optimization efficiency and projection error on these tasks.
\end{enumerate}

Additionally, we also conduct experiments on \emph{non}-\task~tasks based on OpenAI Gym in \Apxref{Apx:additional}. Results are still consistent with our analysis, i.e., methods with prefix $\gQ$-HC outperform $\gQ$-RNN and vanilla SAC. Thus, we believe the benefit of HC-decomposition architecture worth the potential defect of projection error. In future work, we will conduct more study on this architecture to understand its properties thoroughly.

\subsection{Comparative Evaluation} \label{sec:baseline}
In this subsection, we compare our algorithm with previous algorithms in \task~ high-dimensional delayed reward tasks. The following baselines are included in the comparison.

\begin{itemize}[leftmargin=0.2in, itemsep=-0.5pt, topsep=-0.5pt, partopsep=-3pt]
    \item LIRPG \citep{Zheng2018lirpg}. It utilizes intrinsic rewards to make policy gradients more efficient. The intrinsic reward is learnt by meta-gradient from the same objective function. We use the same code provided by the paper.
    \item RUDDER \citep{jose2019Rudder}. It decomposes the delayed and sparse reward to a surrogate per-step reward via regression. Additionally, it utilizes on-policy correction to ensure the same optimality w.r.t the original problem. We alter the code for the locomotion tasks.
    \item SAC-IRCR \citep{Tanmay2020ircr}. It utilizes off-policy data to provide a smoothed guidance rewards for SAC. As mentioned in the paper, the performance heavily relies on the smoothing policy. We implement a delayed reward version of IRCR in our setting.
\end{itemize}

For clarity, we only include $\gQ$-HC-Pairwise-1 and $\gQ$-HC-RNN in the comparison. Results are shown in \Figref{fig:base}. We find that LIRPG and RUDDER perform sluggishly in all tasks, perhaps due to the on-policy nature (\ie, based on PPO \citep{schulman2017proximal}). SAC-IRCR performs well only on some easy tasks (e.g. Hopper-v2). Unfortunately, in other cases, IRCR has a bad performance (e.g. Ant-v2, Walker2d-v2, Reacher-v2). We suspect in these tasks, the smoothing technique results in a misleading guidance reward which biases the agent. Thus, SAC-IRCR is not a safe algorithm in solving delay reward tasks. In contrast, ours as well as other implementations (shown in \Apxref{Apx:additional}) perform well on all tasks and surpass the baselines by a large margin. Most surprisingly, our algorithm can achieve the near optimal performance, i.e, comparing with Oracle SAC which is trained on dense reward environment for 1M steps. Noticing that we use an environment in which rewards are given only every 20 steps.

\begin{figure*}[h]
\begin{center}
  \subfigure[Point Reach]{ 
    \label{fig:maze} 
    \includegraphics[width=1.5in, height=1.55in]{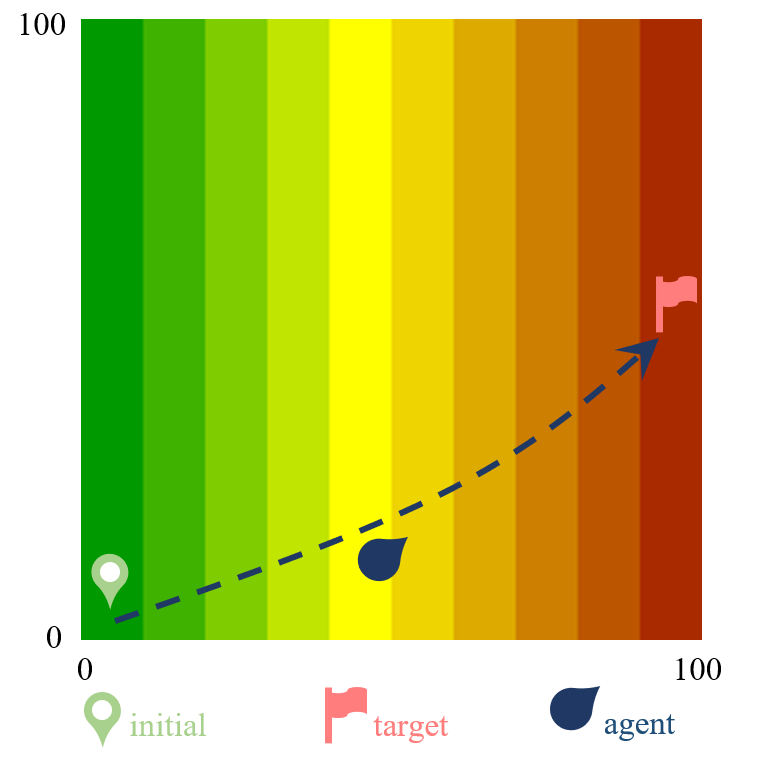} 
  } 
  \subfigure[Learning Curve]{ 
    \label{fig:point} 
    \includegraphics[width=1.5in, height=1.55in]{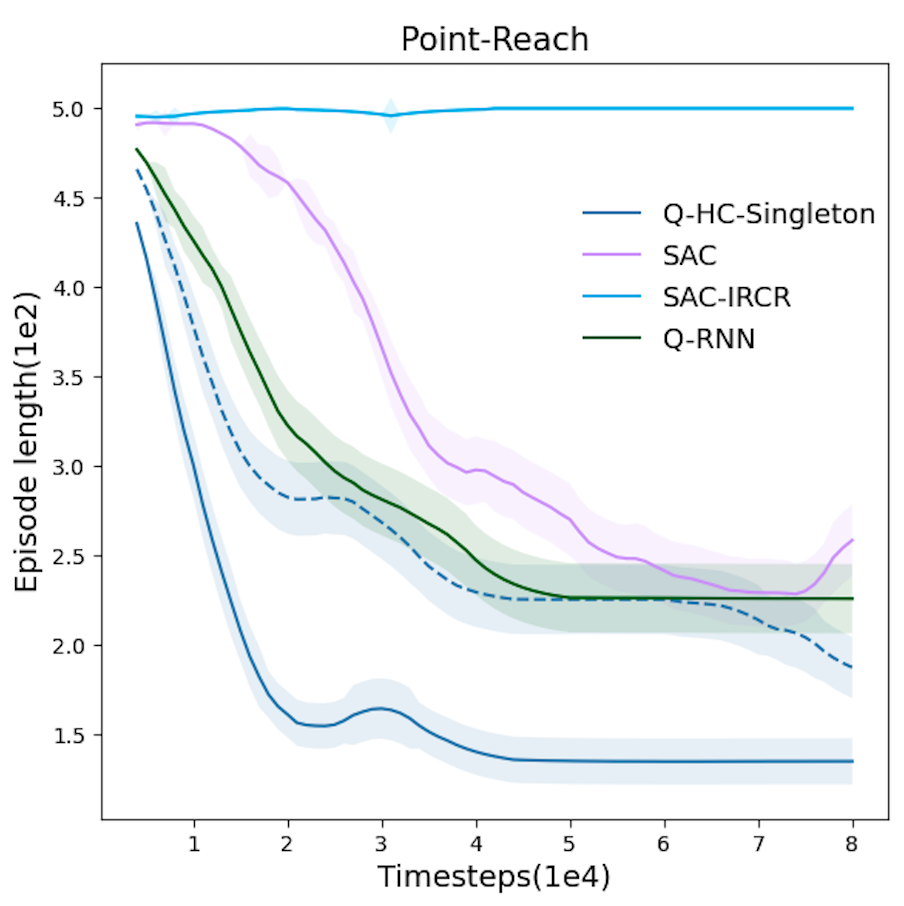} 
  }
  \subfigure[Heatmap]{ 
    \label{fig:heatmap} 
    \includegraphics[width=1.5in, height=1.45in]{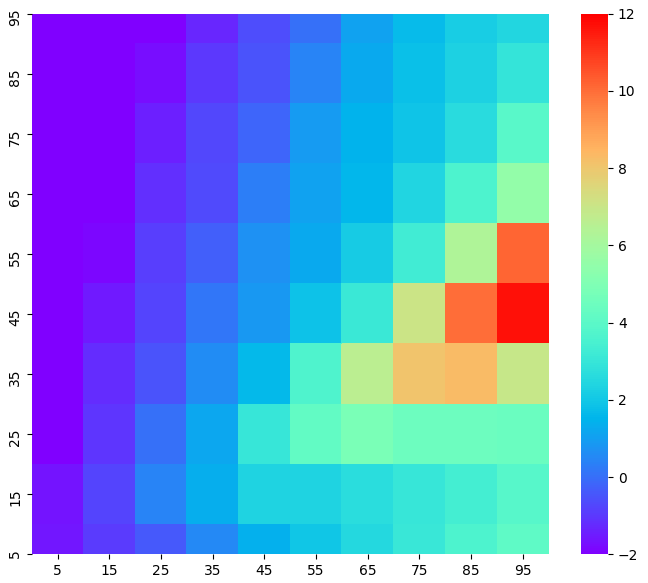} 
  } 
  
\caption{\textbf{(a)} $100\times100$ Point-Reach task with additional positional reward (indicated by area color). Reward is given every 20 steps. \textbf{(b)} Learning curves of $\gQ$-HC-Singleton and several baselines. Dashed line represents $\gQ$-HC-Singleton without regularization. Y-axis shows the number of steps needed for the point agent to reach the target area (cut after 500 steps). All curves represent the mean of 10 seeds and are smoothed equally for clarity.\textbf{(c)} Heatmap visualization of $b_{\phi}$ in $\gQ$-HC-Singleton.  }
\end{center}
\vskip -0.15in
\end{figure*}

We also conduct experiments on tasks with general reward functions, in which algorithmic framework and the HC-decomposition can be naturally extended to (please refer to \Apxref{Apx:proof} and \Apxref{Apx:algo}). In the General Experiment of \Figref{fig:base}, we plot the relative average performance w.r.t Oracle SAC for different amount of overlapped steps. $\gQ$-HC is the SOTA algorithm in every circumstance. Please refer to \Apxref{Apx:imp} for experiment details.

\subsection{Historical Information}
In addition, with a toy example, we explore what kind of information the \emph{H}-component learns so that the \emph{C}-component makes suitable decisions. The toy example is a target-reaching task illustrated in \Figref{fig:maze}. The point agent is given delayed reward which roughly indicates its distance to the target area. This mimics the low frequency feedback from the environment. Noticing that the reward function is not in \task. Please refer to \Apxref{Apx:imp} for more  details. 

The learning curves are shown in \Figref{fig:point}. Clearly, $\gQ$-HC-Singleton outperforms the baselines by a large margin. For better illustration, we visualize $b_{\phi}(s_t, a_t)$ in $\gQ$-HC-Singleton on the grid in \Figref{fig:heatmap}. The pattern highlights the line from the start point to the target area (i.e., the optimal policy), suggesting that $b_{\phi}$ has captured some meaningful patterns to boost training. We also observe a similar pattern for HC-Singleton without regression (\Apxref{Apx:imp}). This is an interesting discovery which shows that the \emph{H}-component may also possess direct impact on the policy learning instead of simply function as an approximator.

\section{Related Work} \label{sec:related}
\textbf{Off-Policy RL:} Off-policy deep reinforcement learning algorithms TD3 \citep{fujimoto2018td3} and SAC \citep{haarnoja2018sac} are the most widely accepted actor-critic algorithms on the robot locomotion benchmark \citep{yan2016benchmark} so far. Based on previous work \cite{degris2012off, silver2014dpg}, \citet{fujimoto2018td3} puts forward the clipped double Q-learning (CDQ) technique to address overestimation in critic learning. \citet{haarnoja2018sac} also uses CDQ technique but instead optimizes the maximum entropy objective. These two methods lead to a more robust policy. In our setting, we observe that SAC-based algorithms slightly outperforms TD3-based algorithms suggesting the benefit of maximum entropy in delayed reward tasks.

\textbf{Delayed or Episodic Reward RL:} Developing  RL algorithms for delayed reward or episodic reward (at the end of the trajectory) has become a popular research area recently. These tasks are known to be difficult for long-horizon credit assignment \cite{sutton1984credit}. To address this issue, \citet{Zheng2018lirpg} proposes to use intrinsic reward to boost the efficiency of policy gradients. The intrinsic reward is learnt via meta-learning \citep{Finn2017maml}. \citet{Tanmay2019gasil} and  \citet{yijie2018gasil} use a discriminator to provide guidance reward for the policy. The discriminator is trained jointly with the policy with binary classification loss for self-imitating. In RUDDER \citep{jose2019Rudder}, it utilizes a recurrent network to predict a surrogate per-step reward for guidance. To ensure the same optimality, the guidance is then corrected with ground truth reward. \citet{liu2019rewardregression} extends the design and utilizes a Transformer \cite{ashish2017transformer} network for better credit assignment on episodic reward tasks. Recently, \citet{Martin2020GCN} proposes to use GCN \citep{welling2019gcn} network to learn a potential function for reward shaping \citep{Ng1999rewardshape}. Noticing that all these algorithms are based on PPO \citep{schulman2017proximal} and thus are on-policy algorithms while ours is an off-policy one. 

Most recently, \citet{Tanmay2020ircr} proposes to augment off-policy algorithms with a trajectory-space smoothed reward in episodic reward setting. The design turns out to be effective in solving the problem. However, as mentioned by the paper itself, this technique lacks theoretical guarantee and heavily relies the choice of the smoothing policy. As shown in \secref{sec:baseline}, in many cases, this method becomes extremely spurious. Our algorithm is derived from a theoretical perspective and performs well on all tasks.
\section{Conclusion}

In this paper, we model the sequential decision problem with delayed rewards as Past-Invariant Delayed Reward MDPs. As previous off-policy RL algorithms suffer from multiple problems in \cond-\mdp, we put forward a novel and general algorithmic framework to solve the \cond-\mdp~problems that has theoretical guarantees in the tabular case. The framework relies on a novelly defined $\mathcal{Q}$-value. However, in high dimensional tasks, it is hard to approximate the $\mathcal{Q}$-value directly. To address this issue, we propose to use the HC-approximation framework for stable and efficient training in practice. In the experiment, we compare different implementations of the HC framework. They all perform well and robustly in continuous control \cond-\mdp~locomotion tasks based on OpenAI Gym. Besides, our method outperforms previous baselines on delayed reward tasks remarkably, suggesting that our algorithm is a SOTA algorithm on these tasks so far. 

In terms of future work, two research directions are worth exploring. One is to develop our algorithm and the HC-approximation scheme to various real world settings mentioned in \Secref{sec:intro}. Additionally, we may also develop an offline \cite{levine2020offline} version of our algorithm which is more practically useful. The other direction is to design efficient and advanced algorithms with theoretical guarantees for the general \mdp~tasks.



\bibliographystyle{unsrtnat}  
\bibliography{neurips_2021}  


\clearpage

\appendix

\newgeometry{
 textheight=9in,
 textwidth=6.5in,
 top=1in,
 headheight=12pt,
 headsep=25pt,
 footskip=30pt
}

\newtheorem{remark}{Remark}

\section{Discussions and Formal Proofs} \label{Apx:proof}
In this part, we provide the proofs for the statements in the main paper. In addition, we also add detailed discussions on the issues mentioned above. To begin with, we restate the definitions with general reward functions.

\begin{definition}[\mdp~with General Reward Function, \mdp-c] \label{def:mdp-c}
A \mdp-$c$ $M_c = (S, A, p, q_n, r_c, \gamma)$ is described by the following parameters.
\begin{enumerate}[leftmargin=0.4in, itemsep=-3pt, topsep=-3pt, partopsep=-15pt]
\item The state and action spaces are $S$ and $A$ respectively.
\item The Markov transition function is $p(s'|s, a)$ for each $(s, a) \in S\times A$;  the initial state distribution is  $p(s_0)$.
\item The signal interval length is distributed according to $q_n(\cdot)$, \ie, for the $i$-th signal interval, its length $n_i$ is  independently drawn from $q_n(\cdot)$.
\item The general reward function $r_c$ defines the expected reward generated for each signal interval with a overlap of maximal $c$ steps with previous signal intervals; suppose $\tau_i = \tau_{t:t+n_i}=(s, a)_{t:t+n_i} = ((s_t, a_t), \dots (s_{t+n-1}, a_{t+n_i-1}))$ is the state-action sequence during the $i$-th signal interval of length $n_i$, then the expected reward for this interval is $r_c(\tau_{t-c:t+n_i})$. \footnote{If $t < c$, $\tau_{t-c:0}$ refers to some zero token paddings which indicates the beginning of the trajectory.}
\item The reward discount factor is $\gamma$.
\end{enumerate}
\end{definition}
The \cond-\mdp~is also extended naturally as following.

\begin{definition}[\cond-\mdp~with General Reward Function, \cond-\mdp-c] \label{def:pimdp-c}
A Past-Invariant \mdp-$c$ is a \mdp-$c$~ $M_c = (S, A, p, q_n, r_c, \gamma)$ whose reward function $r_c$ satisfies the following Past-Invariant (\cond) condition:  for any two trajectory segments $\tau_1$ and $\tau_2$ of the same length (no less than $c$), and for any two equal-length trajectory segments $\tau_1'$ and $\tau_2'$ such that the concatenated trajectories $\tau_a \circ \tau_b'$ are feasible under the transition dynamics $p$ for all $a, b \in \{1, 2\}$, it holds that
\begin{align*}
r(\tau_1 \circ \tau_1') > r(\tau_1 \circ \tau_2') \iff r(\tau_2 \circ \tau_1') > r(\tau_2 \circ \tau_2') .
\end{align*}
\end{definition}

\begin{remark}
\Defref{def:mdp} and \Defref{def:pimdp} are the special case ($c=0$) of \Defref{def:mdp-c} and \Defref{def:pimdp-c} respectively. 
\end{remark}

Clearly, we have the following Fact.
\begin{fact}\label{fact:mdp-0}
$\forall c' < c$, \mdp~$M_{c'}$ is also a \mdp~$M_c$ and \cond-\mdp~$M_{c'}$ is also a \cond-\mdp~$M_c$.
\end{fact}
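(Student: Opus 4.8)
The plan is to prove both claims through the single natural reinterpretation of an $M_{c'}$ reward as an $M_c$ reward: the one that simply ignores the extra $c-c'$ overlap steps. I would then verify that this reinterpretation is faithful (it preserves the dynamics and the expected reward of every signal interval) and that it transports the \cond~property from the weaker parameter $c'$ to the stronger parameter $c$.

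First I would handle the \mdp~containment. Given an $M_{c'} = (S, A, p, q_n, r_{c'}, \gamma)$, I define a reward function $r_c$ on $c$-overlapping inputs by
\[
    r_c(\tau_{t-c:t+n}) := r_{c'}(\tau_{t-c':t+n}),
\]
i.e.~$r_c$ discards the first $c-c'$ coordinates of its argument. Since $S, A, p, q_n, \gamma$ are untouched and the expected reward assigned to each signal interval is unchanged, the tuple $(S, A, p, q_n, r_c, \gamma)$ is a valid \mdp-$c$ in the sense of \Defref{def:mdp-c} describing exactly the same process; hence $M_{c'}$ is an $M_c$. This part is essentially bookkeeping.

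The substantive step is the \cond~claim. Fix prefixes $\tau_1, \tau_2$ of equal length at least $c$ and suffixes $\tau_1', \tau_2'$ of equal length with all four concatenations $\tau_a \circ \tau_b'$ feasible. Writing $\hat\tau_a$ for $\tau_a$ with its first $c-c'$ steps removed, the defining identity gives $r_c(\tau_a \circ \tau_b') = r_{c'}(\hat\tau_a \circ \tau_b')$, so the desired \cond-$c$ equivalence for $r_c$ is literally the \cond-$c'$ equivalence for $r_{c'}$ evaluated on the truncated prefixes $\hat\tau_1, \hat\tau_2$. I would then check that these truncated data are admissible inputs for the \cond-$c'$ condition of \Defref{def:pimdp-c}: the two prefixes still share a common length equal to $|\tau_1| - (c-c')$, which is at least $c'$ because $|\tau_1| \geq c$; and each concatenation $\hat\tau_a \circ \tau_b'$ remains feasible, since a contiguous sub-trajectory of a feasible trajectory is feasible under $p$. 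Invoking the assumed \cond-$c'$ property of $r_{c'}$ then yields the equivalence, which combined with the first part shows a \cond-\mdp~$M_{c'}$ is a \cond-\mdp~$M_c$.

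The only place requiring care, which I would treat as the main (though mild) obstacle, is the length and feasibility bookkeeping: one must use $c' < c$ precisely to guarantee that truncation leaves prefixes of length at least $c'$, so that they are legal inputs to the weaker condition, and to confirm that dropping a common prefix of every concatenation preserves feasibility of the remaining sub-trajectories. Once these two points are verified the reduction is immediate and involves no computation.
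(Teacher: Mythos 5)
Your proposal is correct, and it matches the paper's treatment: the paper states this fact without proof (prefacing it only with ``Clearly''), relying on the reading of \Defref{def:mdp-c} that a reward overlapping \emph{at most} $c'$ steps with previous intervals is in particular one overlapping at most $c$ steps. Your explicit construction $r_c(\tau_{t-c:t+n}) := r_{c'}(\tau_{t-c':t+n})$, together with the length check $|\hat\tau_a| \geq c'$ and the sub-trajectory feasibility check, is precisely the formalization of that observation.
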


Besides, under the general reward function, we extend the $\gQ$ definition.
\begin{align} \label{equ:q_tau-c}
    \mathcal{Q}^{\pi}(\tau_{t_i-c:t+1}) := \E_{(\tau, n) \sim \pi}\left[\sum_{j=i}^\infty \gamma^{t_{j+1}-t-1} r(\tau_{t_j-c:t_{j+1}}) \Big| \tau_{t_i-c:t+1}  \right].
\end{align}

Correspondingly, $\gQ^{\pi}(\tau_{t_i-c:t+1})$ is optimized via 
\begin{align} \label{equ:q_tau_mse-c}
    \mathcal{L}_{\phi} := \E_{D}\left[(R_{t} + \gamma \hat{\mathcal{Q}}_{\phi}(\tau_{t_j-c:t+2}) - \mathcal{Q}_{\phi}(\tau_{t_i-c:t+1}))^2 \right],
\end{align} 
where $\tau_{t_j-c:t+2}= \tau_{t_i-c:t+1} \circ (s_{t+1}, a'_{t+1})$ (so that $j = i$) if $t$ is not the last step of $\tau_i$, and  $\tau_{t_j-c:t+2} = \tau_{t_{i+1}-c:t_{i+1}} \circ (s_{t_{i+1}}, a'_{t_{i+1}})$ (so that $j = i + 1$) otherwise.

The general policy update is also extended to
\begin{align} \label{equ:pi_tau_gr}
     \nabla_{\theta} \gJ(\pi_{\theta}) = \E_D \left[\nabla_{a_t} \mathcal{Q}^{\pi}(\tau_{t_i-c:t} \circ ( s_t, a_t)) \Big|_{\pi_{\theta}(s_t,t-t_i)} \nabla_{\theta} \pi_\theta(s_t, t-t_i) \right].
\end{align}

\textbf{Without specification, the following proofs hold for $\forall c$.}

\subsection{Proof of \Factref{fact:policy}}
Under \Defref{def:mdp-c}, we denote $\Pi_{\tau,c}= \{\pi(a_t|\tau_{t_i-c:t} \circ s_t)\}$ as the extension of $\Pi_{\tau}$ ($c=0$). Similarly, we restate the \Factref{fact:policy} with general reward function as \Factref{fact:policy-c}.

\begin{fact}[\Factref{fact:policy} with General Reward Function] \label{fact:policy-c}
For any \mdp-$c$, there exists an optimal policy $\pi^* \in \Pi_{\tau,c}$. However, there exists some \mdp-$0$ (so as \mdp-c $\forall c$) such that all of its optimal policies $\pi^* \notin \Pi_s$.
\end{fact}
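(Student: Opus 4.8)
The plan is to treat the two halves of the statement separately: the positive half (an optimal policy lives in $\Pi_{\tau,c}$) by reducing the \mdp-$c$ to a genuine Markov decision process via state augmentation, and the negative half (some \mdp-$0$ has no optimal policy in $\Pi_s$) by an explicit small counterexample. For the positive half I would define the augmented state $\tilde s_t := \tau_{t_i-c:t}\circ s_t$, i.e.\ the within-interval history carried $c$ steps into the previous interval together with the current state, and verify that the process $(\tilde s_t)$ is Markov under any policy acting on $\tilde s_t$. The three things to check are: (i) the next environment state $s_{t+1}$ depends on the past only through $s_t$, by the Markov property of $p$; (ii) the elapsed time $t-t_i$ is encoded in the \emph{length} of $\tilde s_t$, so the hazard that the current interval terminates now, $q_n(n_i=t-t_i+1\mid n_i\ge t-t_i+1)$, is a deterministic function of $\tilde s_t$; and (iii) upon termination the collected reward is the already-expected quantity $r_c(\tau_{t_i-c:t+1})$, determined by $(\tilde s_t,a_t)$, after which the augmented state resets to $\tau_{t+1-c:t+1}\circ s_{t+1}$. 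Hence transitions and the (possibly random) reward of the augmented process depend only on $(\tilde s_t,a_t)$.

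Because the per-interval reward is collected at step $t_{i+1}-1$, the per-step discount $\gamma$ of the augmented MDP reproduces exactly the factor $\gamma^{t_{i+1}-1}$ appearing in $R(\tau,n)$, so the augmented discounted return coincides with $\gJ$. I would then invoke the classical existence theorem for infinite-horizon discounted MDPs (bounded rewards, standard Borel state/action spaces): there is an optimal \emph{stationary} policy, i.e.\ one depending only on the current augmented state $\tilde s_t=\tau_{t_i-c:t}\circ s_t$. Such a policy is by definition a member of $\Pi_{\tau,c}$, which proves the first claim.

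For the negative half I would exhibit a concrete \mdp~with $c=0$. Take three states $s_0,s_0',m$, a fixed interval length $n=2$ (a point mass for $q_n$), and initial distribution uniform on $\{s_0,s_0'\}$. Both $s_0$ and $s_0'$ transition to $m$ regardless of the action taken; at $m$ each of the two actions $b_1,b_2$ sends the agent, with probability $1/2$ each, to $s_0$ or $s_0'$, which then begins the next interval. For an interval $\tau_i=((x,\cdot),(m,b))$ with $x\in\{s_0,s_0'\}$ set
\begin{align*}
 r(\tau_i) = \mathbf{1}[x=s_0,\, b=b_1] + \mathbf{1}[x=s_0',\, b=b_2].
\end{align*}
Here $m$ is always visited at relative time $t-t_i=1$, so any policy in $\Pi_s$ must play one fixed distribution $(q,1-q)$ over $\{b_1,b_2\}$ at $m$ regardless of whether the interval began at $s_0$ or $s_0'$; since the two start states are equally likely (the step-$0$ action being irrelevant to both transition and reward), every such policy earns expected per-interval reward $\tfrac12 q+\tfrac12(1-q)=\tfrac12$. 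In contrast a policy in $\Pi_\tau$ observes $\tau_{t_i:t}$, hence the start state, and can play $b_1$ after $s_0$ and $b_2$ after $s_0'$, earning $1$ every interval. As every interval is structurally identical, this strict per-interval gap yields a strict gap in $\gJ$, so no $\Pi_s$ policy is optimal; by \Factref{fact:mdp-0} this \mdp-$0$ is also a \mdp-$c$ for every $c$, completing the statement.

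The main obstacle I anticipate is rigorously establishing the Markov property of the augmented process in the positive half, specifically point (ii): the interval length $n_i$ is latent and drawn once per interval, yet the augmented state records only the elapsed time, not $n_i$ itself. The crux is to confirm that this loss is harmless, because both the termination hazard and the posterior over the remaining length given the observations are functions of the elapsed time alone (namely $q_n$ renormalized to $\{n_i\ge t-t_i+1\}$). Once this is in hand, everything else is the Markov property of $p$ together with off-the-shelf existence theory for discounted MDPs, and the negative half is a direct computation on the three-state example.
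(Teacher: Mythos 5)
Your proposal is correct, but the positive half takes a genuinely different route from the paper. For the negative half you give essentially the paper's own counterexample: the paper uses a four-state gadget with fixed interval length $2$, uniform initial state, and XOR reward $r(a_i,b_j)=i\oplus j$, which is the same matching construction as your $\{s_0,s_0',m\}$ example (any $\Pi_s$ policy earns $\tfrac12$ per interval while a history-aware policy earns $1$). For the positive half, however, the paper does \emph{not} reduce to a classical MDP. It instead proves a ``policy switch'' lemma: given an optimal $\pi\in\Pi_{\mathrm{all}}$ and two trajectory prefixes whose recent windows $\tau_{t_i-c:t}$ agree, redefining $\pi$ on one prefix to copy its behavior on the other preserves $\gJ(\pi)$ (the argument decomposes $\gJ$ around the prefix and uses optimality to equate the continuation values $\gQ^{\pi}$). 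It then inducts over $t$, performing these switches to produce policies in successively larger classes $\Pi_{\tau,c}^t$ that agree with $\Pi_{\tau,c}$ up to time $t$, obtaining an optimal $\pi_\infty\in\Pi_{\tau,c}$ in the limit. Your state-augmentation argument ($\tilde s_t=\tau_{t_i-c:t}\circ s_t$, hazard $q_n(n_i=t-t_i+1\mid n_i\ge t-t_i+1)$ read off the length of $\tilde s_t$, reward collected at interval end so that per-step discounting reproduces $\gamma^{t_{i+1}-1}$) is sound, and your worry about the latent $n_i$ is resolved exactly as you say, since interval lengths are i.i.d.\ and independent of everything else. The trade-off: your route imports off-the-shelf existence of optimal stationary policies, which is clean but requires regularity hypotheses (finite/countable spaces, or compact actions with measurable-selection conditions) that the paper never states; the paper's switch-and-induct argument is more elementary and self-contained, but it too silently presupposes that an optimal policy exists in $\Pi_{\mathrm{all}}$ and only shows it can be converted into one in $\Pi_{\tau,c}$ of equal value. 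Neither proof is fully rigorous about existence in general Borel spaces, so your approach is an acceptable, arguably more transparent, alternative.
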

In general, any policy belongs to the class $\Pi_{\text{all}} = \{\pi(a_t| \tau_{0:t} \circ s_t, n_{0:t})\}$. Namely, the agent's policy can only base on all the information it has experienced till step $t$. Besides, in \mdp-c, the expected discounted cumulative reward is
\begin{align*}
    \gJ(\pi) = \sum_{\tau, n}\rho^{\pi}(\tau, n) R(\tau, n).
\end{align*}
where $\rho^{\pi}(\tau, n)$ denotes the probability that trajectory ($\tau, n$) is generated under policy $\pi \in \Pi_{\text{all}}$. To begin with, we first prove the following Lemma.
\begin{lemma}
For any policy $\pi \in \Pi_{\text{all}}$ which is the optimal, i.e, maximize $\gJ(\pi)$, we consider two segments of trajectories $(\tau_{0:t^a}^a, n_{0:t^a}^a)$ and $(\tau_{0:t^b}^b, n_{0:t^b}^b)$ which satisfies that $\tau^a_{t^a_i-c:t^a} = \tau^b_{t^b_i-c:t^b}$ and $t^b \geq t^a$. Besides, we also consider $\forall t \geq t^b, t' = t - t^b + t^a$ and $\tau^b_{0:t} = \tau^b_{0:t^b} \circ \tau_{t^b:t}, n^b_{0:t}=n^b_{0:t^b} \circ n_{t^b:t}$ and state $s_t^{(')}$ feasible in the dynamic, we switch the policy $\pi$ to $\pi'$ w.r.t these two trajectories as following
\begin{align*}
    \pi'(\cdot|\tau^b_{0:t} \circ s_t, n^b_{0:t}) = \pi(\cdot|\tau^a_{0:t^a} \circ \tau_{t^b:t} \circ s_{t'}, n^a_{0:t^a} \circ n_{t^b:t}).
\end{align*}
Then, $\pi' \in \Pi_{\text{all}}$ and $\gJ(\pi') = \gJ(\pi)$.
\end{lemma}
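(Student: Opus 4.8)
The plan is to exploit three structural features of a \cond-\mdp-$c$: the transition law $p$ is Markov, the reward $r_c$ reads at most the last $c$ steps of the previous interval, and the interval lengths are drawn i.i.d.\ from $q_n$ independently of the past. Together these imply that, conditioned on a history, the \emph{future} evolution of a trajectory depends on the past only through the recent window $\tau_{t_i-c:t}$, the current state $s_t$, and the offset $t-t_i$ within the current interval. The hypothesis $\tau^a_{t^a_i-c:t^a}=\tau^b_{t^b_i-c:t^b}$ forces $t^a-t^a_i=t^b-t^b_i$ and matches the recent window, while the lemma's identification of the current state supplies $s_{t^b}=s_{t^a}$; all three statistics then agree at $t^a$ and at $t^b$. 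I would therefore build a measure-preserving correspondence between continuations of the $a$-context and the $b$-context, under which the shared suffix $\tau_{t^b:t}$ is relabelled by $t\mapsto t'=t-t^b+t^a$, every future interval length is matched, and consequently every future reward value $r_c(\cdot)$ is identical. This coupling is the engine of the proof.

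First I would verify $\pi'\in\Pi_{\text{all}}$: the displayed formula assigns a valid conditional distribution to each history extending the prefix $\tau^b_{0:t^b}$, and $\pi'$ is set equal to $\pi$ on all remaining histories, so there is no conflict and $\pi'$ is a bona fide policy. Next, since $\pi'$ and $\pi$ agree on every history strictly shorter than $t^b$, they induce the same law on the prefix $(\tau^b_{0:t^b},n^b_{0:t^b})$, assign it the same reaching probability, and collect the same discounted reward up to that point; hence $\gJ(\pi')-\gJ(\pi)$ collapses to the reaching-probability-weighted difference between the expected discounted \emph{continuation} rewards accrued from $\tau^b_{0:t^b}$ under the two policies.

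It then remains to show this continuation difference vanishes. When the reaching probability is $0$ the claim is immediate. When it is positive, optimality of $\pi$ forces its continuation from $\tau^b_{0:t^b}$ to attain the optimal normalized continuation value, which by the structural facts above is a function $V^\ast$ of only $(\tau^b_{t^b_i-c:t^b},\,s_{t^b},\,t^b-t^b_i)$; the coupling shows that running the transplanted ($a$-)behaviour on the $b$-context realizes exactly the normalized continuation value that $\pi$ achieves on the $a$-context. Equating the two via $V^\ast$ then yields $\gJ(\pi')=\gJ(\pi)$.

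The hard part is this last equality. The subtlety is that the $a$-context need not itself be reached with positive probability, in which case $\pi$'s behaviour there is a priori unconstrained and optimality alone only delivers the uninformative bound $\gJ(\pi')\le\gJ(\pi)$. The argument must therefore either restrict to (or canonically choose) a reachable reference context $a$, so that $\pi$'s $a$-continuation is itself optimal and hence equals $V^\ast$, or else invoke time-homogeneity of the normalized optimal continuation value to identify both sides with $V^\ast$ directly. A second place demanding care is the interval boundary: the reward $r_c(\tau_{t^b_i-c:t_{i+1}})$ of the interval straddling $t^b$ is determined jointly by the matched past $\tau_{t^b_i-c:t^b}$ and the coupled future, so one must check that the coupling preserves this term; the discount exponents cause no trouble, since the whole comparison is performed at the single absolute time $t^b$ inside one MDP.
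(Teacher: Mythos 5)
Your proposal takes essentially the same route as the paper's proof: the paper likewise splits $\gJ(\pi')$ at the prefix $(\tau^b_{0:t^b}, n^b_{0:t^b})$ into a term $J^{\pi'}(\tau^b_{0:t^b}, n^b_{0:t^b})$ coming from trajectories not extending that prefix (unaffected by the switch), plus the reaching probability times the already-accrued rewards and the continuation value $\gQ^{\pi'}(\tau^b_{t_i^b-c:t^b})$, and then identifies $\gQ^{\pi'}(\tau^b_{t_i^b-c:t^b}) = \gQ^{\pi}(\tau^a_{t_i^a-c:t^a}) = \gQ^{\pi}(\tau^b_{t_i^b-c:t^b})$ from the construction of $\pi'$ together with optimality of $\pi$ --- exactly your coupling-plus-$V^\ast$ argument. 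The reachability subtlety you flag is genuine and is in fact glossed over by the paper, which asserts that chain of equalities without treating the case where the $a$-prefix has zero probability under $\pi$ (there optimality constrains nothing about $\pi$'s $a$-continuation, and transplanting it onto a positively-reached $b$-prefix could strictly decrease the objective); your suggested remedy of restricting to, or canonically choosing, a reachable reference context is what a fully rigorous version of this lemma (and of its use in the proof of Fact~\ref{fact:policy}) requires.
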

\begin{proof}
Obviously, $\pi'$ is well-defined and $\pi' \in \Pi_{\text{all}}$ as the transition is Markovian. Noticing that
\begin{align*}
    \gJ(\pi') & = \sum_{\tau, n}\rho^{\pi'}(\tau, n) R(\tau, n) \\
    & = \underbrace{\sum_{(\tau, n) : (\tau^b_{0:t^b}, n^b_{0:t^b}) \not\subset (\tau, n)} \rho^{\pi'}(\tau, n) R(\tau, n)}_{\text{denoted by }J^{\pi'}(\tau_{0:t^b}^b, n_{0:t^b}^b)} + \rho^{\pi'}(\tau^b_{0:t^b}, n^b_{0:t^b}) \sum_{(\tau, n): (\tau^b_{0:t^b}, n^b_{0:t^b}) \subset (\tau, n)} \rho^{\pi'}(\tau, n|\tau_{0:t^b}^b, n_{0:t^b}^b) R(\tau, n) \\
    & = J^{\pi'}(\tau^b_{0:t^b}, n^b_{0:t^b}) + \rho^{\pi'}(\tau^b_{0:t^b}, n^b_{0:t^b}) \sum_{(\tau, n):(\tau^b_{0:t^b}, n^b_{0:t^b}) \subset (\tau, n)} \rho^{\pi'}(\tau, n|\tau_{0:t^b}^b, n_{0:t^b}^b) \left(\sum_{j=1}^{\infty} \gamma^{t_{j+1}-1}r(\tau_{t_j-c:t_{j+1}})\right) \\
    & = J^{\pi'}(\tau^b_{0:t^b}, n^b_{0:t^b}) + \rho^{\pi'}(\tau^b_{0:t^b}, n^b_{0:t^b}) \left( \sum_{j=1}^{i(t^b)-1} \gamma^{t_{j+1}-1} r(\tau_{t_j-c:t_{j+1}})\right) + \rho^{\pi'}(\tau^b_{0:t^b}, n^b_{0:t^b}) \left(\gamma^{t^b-1} \gQ^{\pi'}(\tau^b_{t_i^b-c:t^b}) \right).
\end{align*}
We denote $(\tau_{0:t}, n_{0:t}) \subset (\tau, n)$ if the former is the trajectory prefix of the latter. $\gQ^{\pi'}$ is defined in \Eqref{equ:q_tau} except that $\pi' \in \Pi_{\text{all}}$ in this case.
By the definition of $\pi'$ and the condition that $\pi$ is optimal, we have $\gQ^{\pi'}(\tau_{t_i^b-c:t^b}^b) = \gQ^{\pi}(\tau_{t_i^a-c:t^a}^a) = \gQ^{\pi}(\tau_{t_i^b-c:t^b}^b)$ and $J^{\pi'}(\tau^b_{0:t^b}, n^b_{0:t^b}) = J^{\pi}(\tau^b_{0:t^b}, n^b_{0:t^b})$. Thus, we have
\begin{align*}
    \gJ(\pi') & = J^{\pi'}(\tau^b_{0:t^b}, n^b_{0:t^b}) + \rho^{\pi'}(\tau^b_{0:t^b}, n^b_{0:t^b}) \left( \sum_{j=1}^{i(t^b)-1} \gamma^{t_{j+1}-1} r(\tau_{t_j-c:t_{j+1}})\right) + \rho^{\pi'}(\tau^b_{0:t^b}, n^b_{0:t^b}) \left(\gamma^{t^b-1} \gQ^{\pi}(\tau^b_{t_i^b-c:t^b}) \right) \\
    & = J^{\pi}(\tau^b_{0:t^b}, n^b_{0:t^b}) + \rho^{\pi}(\tau^b_{0:t^b}, n^b_{0:t^b}) \sum_{(\tau, n):(\tau^b_{0:t^b}, n^b_{0:t^b}) \subset (\tau, n)} \rho^{\pi}(\tau, n|\tau_{0:t^b}^b, n_{0:t^b}^b) \left(\sum_{j=1}^{\infty} \gamma^{t_{j+1}-1}r(\tau_{t_j-c:t_{j+1}})\right) \\
    & = \gJ(\pi).
\end{align*}
\end{proof}
We refer it as a \textbf{policy switch on $\pi$ w.r.t $(\tau_{0:t^b}^b, n_{0:t^b}^b)$ and $(\tau_{0:t^a}^a, n_{0:t^a}^a)$ to $\pi'$}. Then, we prove \Factref{fact:policy-c} as follows.
\proof[Proof of \Factref{fact:policy-c}]
For simplicity, we denote $\Pi_{\tau,c}^{t} \subset \Pi_{\text{all}}$ as the policy space that if $\pi \in \Pi_{\tau,c}^t$ and $\forall t_a, t_b \leq t$ then $\forall (\tau_{0:t^a}^a, n_{0:t^a}^a), (\tau_{0:t^b}^b, n_{0:t^b}^b)$ which $\tau^a_{t^a_i-c:t^a} = \tau^b_{t^b_i-c:t^b}$,  $\pi(\cdot| \tau^a_{0:t^a} \circ s_{t^a}, n^a_{0:t^a}) = \pi(\cdot| \tau^b_{0:t^b} \circ s_{t^b}, n^b_{0:t^b})$ for all $s$ feasible. Clearly, $\Pi_{\tau,c}^0 = \Pi_{\text{all}}$ and $\Pi_{\tau,c}^{\infty} = \Pi_{\tau,c}$. Then, starting from some optimal policy $\pi_0 \in \Pi_{\tau, c}^0$, we utilize the policy switch operation to shift it into $\Pi_{\tau}$.

Suppose that $\pi_{t-1} \in \Pi_{\tau,c}^{t-1}$, we consider the following two steps of operations.
\begin{enumerate}[leftmargin=0.4in, itemsep=-3pt, topsep=-3pt, partopsep=-15pt]
    \item \textbf{Step 1}. $\forall (\tau_{0:t}, n_{0:t})$ that $\exists (\tau'_{0:t'}, n'_{t_i':t'})$ which $t' < t$ and $\tau_{t_i-c:t} = \tau'_{t_i'-c:t'}$ (randomly choose one if multiple segments exist), we conduct a policy switch on $\pi_t$ w.r.t $(\tau_{0:t}, n_{0:t})$ and $(\tau_{0:t'}, n_{0:t'})$. We denote the policy after all these policy switches as $\pi_{t-1}'$.
\item \textbf{Step 2}. We denote $S_{\pi_{t-1}}(\tilde{\tau}_{t_i-c:t}) = \{(\tau_{0:t}, n_{0:t}) | \tau_{t_i-c:t} = \tilde{\tau}_{t_i-c:t} \text{ and } \nexists (\tau'_{0:t'}, n'_{0:t'}), t' < t \text{ which } \tau'_{t_i'-c:t'} = \tilde{\tau}_{t_i-c:t}\}$. For each $S_{\pi_{t-1}}(\tilde{\tau}_{t_i-c:t})$, we randomly choose some $(\tau_{0:t}^0, n_{0:t}^0) \in S_{\pi_{t-1}}(\tilde{\tau}_{t_i-c:t})$ and conduct policy switch on $\pi'_{t-1}$ w.r.t all $(\tau, n) \in S_{\pi_{t-1}}(\tilde{\tau}_{t_i-c:t})$ and $(\tau_{0:t}^0, n_{0:t}^0)$. Finally, we get $\pi_t$.
\end{enumerate}

Since $\pi_{t-1} \in \Pi_{\tau,c}^{t-1}$, it is straightforward to check that $\pi_t \in \Pi_{\tau,c}^t$. Besides, as all operations are policy switches, $\pi_t$ is optimal if $\pi_{t-1}$ is optimal. Consequently, by induction from the optimal policy $\pi_0$, we can prove that $\exists \pi_{\infty} \in \Pi_{\tau,c}$ which is optimal.

For the second statement, we consider the following \mdp-0.
\begin{center}
\begin{tikzpicture}[scale=0.15]
\tikzstyle{every node}+=[inner sep=0pt]
\draw [black] (23,-26.8) circle (3);
\draw (23,-26.8) node {$A_0$};
\draw [black] (23,-36.3) circle (3);
\draw (23,-36.3) node {$A_1$};
\draw [black] (32.7,-32) circle (3);
\draw (32.7,-32) node {$B$};
\draw [black] (46.2,-32) circle (3);
\draw (46.2,-32) node {$C$};
\draw [black] (46.2,-32) circle (2.4);
\draw [black] (25.64,-28.22) -- (30.06,-30.58);
\fill [black] (30.06,-30.58) -- (29.59,-29.76) -- (29.11,-30.65);
\draw (29.21,-28.9) node [above] {$a_0$};
\draw [black] (25.74,-35.08) -- (29.96,-33.22);
\fill [black] (29.96,-33.22) -- (29.02,-33.08) -- (29.43,-34);
\draw (29.18,-34.66) node [below] {$a_1$};
\draw [black] (35.7,-32) -- (43.2,-32);
\fill [black] (43.2,-32) -- (42.4,-31.5) -- (42.4,-32.5);
\draw (39.45,-31.5) node [above] {$b_0,\mbox{ }b_1$};
\end{tikzpicture}
\end{center}
whose $n=2$. $a_0, a_1$ denote two actions from $A_0, A_1$ to $B$ respectively and $b_0, b_1$ are two actions from B to C. The reward is defined as $r(a_i, b_j) = i \oplus j, \forall (i,j) \in \{0, 1\}$. The initial state distribution is $p(A_0) = p(A_1) = 0.5$. Clearly, for any optimal policy, the agent has to query for its action from the initial state to B before deciding the action from B to C. This illustrates that $\pi^*$ may not be in $\Pi_s$.
\endproof

\begin{remark}
\Factref{fact:policy} holds as the special case of \Factref{fact:policy-c} when $c=0$.
\end{remark}

\subsection{Proof of \Factref{fact:fixpoint}}
\begin{fact}[\Factref{fact:fixpoint} with General Reward Function] \label{fact:fixpoint-c}
For any distribution $D$ with non-zero measure for any $\tau_{t_i-c:t+1}$, $\mathcal{Q}^{\pi}(\tau_{t_i-c:t+1})$ is the unique fixed point of the MSE problem in \Eqref{equ:q_tau_mse}. More specifically, when fixing $\hat{\mathcal{Q}}_{\phi}$ as the corresponding $\mathcal{Q}^{\pi}$, the solution of the MSE problem is still $\mathcal{Q}^{\pi}$.
\end{fact}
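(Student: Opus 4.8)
The plan is to reduce the statement to a standard Bellman fixed-point argument. First I would observe that in the tabular setting \Eqref{equ:q_tau_mse-c} is separable across the trajectory segments $\tau_{t_i-c:t+1}$: since $D$ places non-zero measure on each segment, $\mathcal{L}_\phi$ is a non-negative combination of per-segment quadratics in the scalar $\mathcal{Q}_\phi(\tau_{t_i-c:t+1})$, each minimized independently. Holding $\hat{\mathcal{Q}}_\phi$ fixed, the minimizer of each per-segment quadratic is the conditional expectation of its regression target,
\begin{align*}
\mathcal{Q}_\phi(\tau_{t_i-c:t+1}) = \E\left[ R_t + \gamma \hat{\mathcal{Q}}_\phi(\tau_{t_j-c:t+2}) \mid \tau_{t_i-c:t+1}\right],
\end{align*}
where the expectation is over the next transition $s_{t+1}\sim p$ and the sampled action $a'_{t+1}\sim\pi$. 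Writing $\mathcal{T}$ for the operator sending $Q\mapsto \E[R_t + \gamma Q(\text{next segment})\mid\cdot\,]$, this shows that a function is a fixed point of the MSE problem exactly when $\mathcal{T}\mathcal{Q}_\phi = \mathcal{Q}_\phi$, so the claim reduces to: $\mathcal{Q}^\pi$ is the unique solution of $\mathcal{T}Q = Q$.

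Next I would verify that $\mathcal{Q}^\pi$ of \Eqref{equ:q_tau-c} satisfies $\mathcal{T}\mathcal{Q}^\pi = \mathcal{Q}^\pi$ by peeling the first term from the discounted sum and splitting on whether step $t$ ends interval $i$. If $t$ is interior, then $R_t=0$, the next segment is $\tau_{t_i-c:t+1}\circ(s_{t+1},a'_{t+1})$ with the same interval index, and shifting the discount exponent from $t$ to $t+1$ produces $\gamma\,\mathcal{Q}^\pi(\tau_{t_i-c:t+2})$ inside the conditional expectation; the tower property over $(s_{t+1},a'_{t+1})$ then recovers $\mathcal{Q}^\pi(\tau_{t_i-c:t+1})$. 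If instead $t=t_{i+1}-1$, the $j=i$ term carries discount $\gamma^0$ and equals $r(\tau_{t_i-c:t_{i+1}})=R_t$, which is deterministic given the conditioning, while the remaining sum over $j\ge i+1$ is exactly $\gamma\,\mathcal{Q}^\pi$ evaluated at the new-interval segment $\tau_{t_{i+1}-c:t+2}$. This establishes part (a): with $\hat{\mathcal{Q}}_\phi=\mathcal{Q}^\pi$, the per-segment minimizers reproduce $\mathcal{Q}^\pi$.

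For uniqueness I would invoke a contraction argument: for bounded $Q_1,Q_2$,
\begin{align*}
\big|(\mathcal{T}Q_1)(\tau)-(\mathcal{T}Q_2)(\tau)\big| = \gamma\,\big|\E[Q_1(\text{next})-Q_2(\text{next})\mid\tau]\big| \le \gamma\,\|Q_1-Q_2\|_\infty,
\end{align*}
so $\mathcal{T}$ is a $\gamma$-contraction in the sup-norm (using $\gamma<1$ and bounded rewards). By the Banach fixed-point theorem $\mathcal{T}$ has a unique fixed point, which by part (a) is $\mathcal{Q}^\pi$; together with the separability observation this yields that $\mathcal{Q}^\pi$ is the unique fixed point of the MSE problem.

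The main obstacle I anticipate is the Bellman-consistency step for $\mathcal{Q}^\pi$, specifically the interval-boundary case: aligning the discount exponents ($t_{j+1}-t-1$ shifting to $t_{j+1}-(t+1)-1$) and, more importantly, justifying the tower-property collapse when the regression target lives in the new interval. That justification rests on the Markov transitions together with the observation that the $c$-step reward overlap $\tau_{t_{i+1}-c:t_{i+1}}$ is fully determined by the conditioning segment $\tau_{t_i-c:t+1}$, so the future reward distribution conditioned on the new segment coincides with that conditioned on the full past. Everything else is routine discounting bookkeeping and the standard contraction estimate, and the $c=0$ case of \Factref{fact:fixpoint} follows as the special case with no overlap.
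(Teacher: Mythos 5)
Your proposal is correct and takes essentially the same route as the paper: separability of the MSE into per-segment quadratics whose minimizers are conditional expectations, Bellman consistency of $\mathcal{Q}^{\pi}$ obtained by peeling one step and splitting on whether the interval terminates at $t$ (the paper's expansion in \Eqref{equ:extend}, where the two cases carry explicit weights $q_n(n_i = t-t_i \mid n_i \geq t-t_i)$ and $q_n(n_i \neq t-t_i \mid n_i \geq t-t_i)$), and uniqueness via the $\gamma$-contraction estimate in the sup-norm. The only cosmetic difference is that the paper states the contraction as $\lvert \mathcal{Q}_{\phi}(\tau) - \mathcal{Q}^{\pi}(\tau)\rvert \leq \gamma \lVert \hat{\mathcal{Q}}_{\phi} - \mathcal{Q}^{\pi}\rVert_{\infty}$ directly against $\mathcal{Q}^{\pi}$ rather than between two arbitrary bounded functions followed by Banach's theorem, and it keeps the $q_n$ weighting explicit where you fold it into the case split.
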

\proof
Though we state the fact for \cond-\mdp-c, we will prove it for general $\pi \in \Pi_{\tau, c}$ in \mdp-c setting. For brevity, we replace $t+1$ with $t$ in the proof. By definition in \Eqref{equ:q_tau-c}, we have
\begin{align} 
    \gQ^{\pi}(\tau_{t_i-c:t}) = & ~q_n(n_i \neq t-t_i|n_i \geq t-t_i) \left( \gamma \sum_{s_t, a_t} p(s_t|s_{t-1}, a_{t-1})\pi(a_t|\tau_{t_i-c:t} \circ s_t) \gQ^{\pi}(\tau_{t_i-c:t} \circ (s_t, a_t))\right) \nonumber \\
    +  q_n & (n_i=t-t_i|n_i \geq t-t_i)\left(r(\tau_{t_i-c:t}) + \gamma \sum_{s_t, a_t} p(s_t|s_{t-1}, a_{t-1})\pi(a_t|\tau_{t-c:t} \circ s_t) \gQ^{\pi}(\tau_{t-c:t} \circ (s_t, a_t)) \right). \label{equ:extend}
\end{align}

Clearly, $\gQ^{\pi}(\tau_{t_i-c:t})$ is the solution of the following MSE problem. Noticing that we assume $\gQ_{\phi}$ has tabular expression over $\tau_{t_i-c:t}$s.
\begin{align*}
    \min_{\phi} \sum_{R_{t-1}, s_t} p_D(\tau_{t_i-c:t}, R_{t-1}, s_t)\pi(a_t'|\tau_{t_j-c:t} \circ s_t) \left(R_{t-1} + \gamma \gQ^{\pi}(\tau_{t_j-c:t} \circ (s_t, a_t')) - \gQ_{\phi}(\tau_{t_i-c:t}) \right)^2.
\end{align*}
 Here, $\tau_{t_j-c:t}$ is defined similarly as in the main text and $p_D$ refers to the probability of the fragment of trajectory is sampled from the buffer $D$. We denote the objective function as $\gL_{\phi}(\tau_{t_i-c:t})$. As $\gL_{\phi}$ in \Eqref{equ:q_tau_mse-c} is the sum of all $\gL_{\phi}(\tau_{t_i-c:t})$, $\gQ^{\pi}(\tau_{t_i-c:t})$ is a fixed point of the MSE problem.

The uniqueness is proved similar to the standard MDPs setting. For any $\hat{\gQ}_{\phi}$, we denote the optimal solution of \Eqref{equ:q_tau_mse} as $\gQ_{\phi}(\tau_{t_i-c:t})$. Since we assume tabular expression and non-zero probability of any $p_D(\tau_{t_i-c:t})$ Then, $\forall \tau_{t_i-c:t}$, we have
\begin{align*}
    \Big|\gQ_{\phi}(\tau_{t_i-c:t}) - \gQ^{\pi}(\tau_{t_i-c:t}) \Big| \leq &~\gamma q_n(n_i \neq t - t_i|n_i \geq t - t_i)\sum_{s_t, a_t} p(s_t|s_{t-1}, a_{t-1})\pi(a_t|\tau_{t_i-c:t} \circ s_t) \Big|\hat{\gQ}_{\phi}(\tau_{t_i-c:t+1}) - \\ 
    & \gQ^{\pi}(\tau_{t_i-c:t+1})  \Big|  + \gamma q_n(n_i = t - t_i|n_i \geq t - t_i)\sum_{s_t, a_t} p(s_t|s_{t-1}, a_{t-1})\pi(a_t|\tau_{t-c:t} \circ s_t) \\
    & \Big|\hat{\gQ}_{\phi}(\tau_{t-c:t} \circ (s_t, a_t)) - \gQ^{\pi}(\tau_{t-c:t} \circ (s_t, a_t)) \Big| \\
    \leq & ~~ \gamma \parallel \hat{\gQ}_{\phi} - \gQ^{\pi} \parallel_{\infty}
\end{align*}

Last term denotes the infinite norm of the vector of residual value between $\gQ^{\pi}$ and $\hat{\gQ}_{\phi}$. Each entry corresponds to some $\tau_{t_i-c:t}$ feasible in the dynamic. Consequently, by the $\gamma$\shortn concentration property, if  $\gQ_{\phi} = \hat{\gQ}_{\phi}$ after the iteration (\ie, $\gQ_{\phi}$ is a fixed point), then $\gQ_{\phi} = \gQ^{\pi}$. This completes the proof of uniqueness.
\endproof

\begin{remark}
\Factref{fact:fixpoint} holds as the special case of \Factref{fact:fixpoint-c} when $c=0$.
\end{remark}

\subsection{Proof of \Factref{fact:monotone}}
\begin{fact}[\Factref{fact:monotone} with General Reward Function] \label{fact:monotone-c}
For any $\tau_{t_i-c:t}, \tau_{t_i-c:t}'$ and $s_t, a_t, a_t'$ which both $\tau_{t_i-c:t} \circ s_t$ and $\tau_{t_i-c:t}' \circ s_t$ are feasible under the transition dynamics, $\forall \pi \in \Pi_s$, we have that
\begin{align*}
    \mathcal{Q}^{\pi}(\tau_{t_i-c:t} \circ (s_t, a_t)) > \mathcal{Q}^{\pi}(\tau_{t_i-c:t} \circ (s_t, a_t')) \iff \mathcal{Q}^{\pi}(\tau_{t_i-c:t}' \circ (s_t, a_t)) > \mathcal{Q}^{\pi}(\tau_{t_i-c:t}' \circ (s_t, a_t'))
\end{align*}
\end{fact}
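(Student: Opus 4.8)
The plan is to prove \Factref{fact:monotone-c} by first isolating \emph{where} the past segment $\tau_{t_i-c:t}$ can possibly influence $\mathcal{Q}^{\pi}$, and then invoking the \cond~condition precisely at that point. I would fix $s_t$, the relative position $m = t-t_i$, and the two pasts $\tau_{t_i-c:t},\tau_{t_i-c:t}'$, and begin with the key consequence of $\pi\in\Pi_s$ together with the Markov transition: conditioned on the current pair $(s_t,a_t)$ and on $m$, the distribution of the entire future trajectory $(s_{t+1},a_{t+1},s_{t+2},\dots)$ and of all interval lengths $n_i,n_{i+1},\dots$ is \emph{independent of the past}. Indeed, future states are generated by $p$ from $(s_t,a_t)$ onward, future actions are chosen by $\pi\in\Pi_s$ which reads only the current state and relative position and never the past, and $q_n$ is exogenous with $n_i$ conditioned only on $n_i\ge m+1$. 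Hence the only channel through which $\tau_{t_i-c:t}$ enters $\mathcal{Q}^{\pi}(\tau_{t_i-c:t}\circ(s_t,a_t))$ is through reward windows whose input overlaps the steps strictly before $t$.

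Using this, I would write the decomposition $\mathcal{Q}^{\pi}(\tau_{t_i-c:t}\circ(s_t,a)) = U(s_t,a,m) + \E_{y}\big[\gamma^{|y|}\,R_{\mathrm{head}}(\tau_{t_i-c:t}\circ(s_t,a)\circ y)\big]$, where $y$ is the future completion drawn from a past-independent law $\mu_{s_t,a,m}$, the term $U$ collects all reward windows lying entirely after the pre-$t$ steps (so $U$ is past-independent), and $R_{\mathrm{head}}$ collects the finitely many reward windows that do touch the past: for $c=0$ this is solely the current interval reward $r(\tau_{t_i-c:t_{i+1}})$, while for $c>0$ it additionally absorbs the $\le c$ overlapped steps feeding the immediately following interval's reward when the current interval ends within $c$ steps. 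With the past $\tau_{t_i-c:t}$ in the role of the \cond-prefix and $(s_t,a)\circ y$ in the role of the \cond-suffix (the two pasts have equal length $m+c\ge c$, and every concatenation $\tau_a\circ\tau_b'$ is feasible because $\tau_{t_i-c:t}'\circ s_t$ is feasible and $y$ is a realizable continuation of $(s_t,a)$), \Defref{def:pimdp-c} then yields that, for any two equal-length suffixes, the \emph{sign} of the $R_{\mathrm{head}}$-difference is the same under $\tau_{t_i-c:t}$ and under $\tau_{t_i-c:t}'$.

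The final step is to pass from this per-window sign invariance to the invariance of the sign of the full difference $\mathcal{Q}^{\pi}(\tau_{t_i-c:t}\circ(s_t,a_t))-\mathcal{Q}^{\pi}(\tau_{t_i-c:t}\circ(s_t,a_t'))$, and this is the step I expect to be the main obstacle. The difficulty is that $a_t$ and $a_t'$ induce \emph{different} future laws $\mu_{s_t,a_t,m}\ne\mu_{s_t,a_t',m}$, so the $\mathcal{Q}$-difference is an expectation of head-reward differences over two different futures plus the past-independent offset $U(s_t,a_t,m)-U(s_t,a_t',m)$; since \cond~pins down only the \emph{sign} of each reward difference and not its magnitude, a large offset could in principle flip the aggregate sign for one past but not the other. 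To overcome this I would couple the two actions through shared exogenous randomness (a common draw of all interval lengths and transition noise), reducing the comparison to a pointwise one for each coupled future, and then argue that \cond~in fact induces a single past-independent total preorder on continuations, of which $r(\tau_{t_i-c:t}\circ\cdot)$ is a strictly increasing reparametrization for every fixed past, so that one and the same preorder governs the comparison for both $\tau_{t_i-c:t}$ and $\tau_{t_i-c:t}'$. I would flag that closing this cleanly seems to need a structural handle slightly beyond bare \cond; in the \task~special case it is transparent, because there the past contributes only an additive constant to each interval reward, and that constant is multiplied by $\E_{y}[\gamma^{|y|}]$, which is action-independent since $n_i$ is independent of the actions, so it cancels in the action-difference and the $\mathcal{Q}$-difference becomes \emph{exactly} past-independent.
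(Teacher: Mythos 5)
Your proposal follows essentially the same route as the paper's own proof of \Factref{fact:monotone-c}. The paper likewise starts from the observation that, because $\pi\in\Pi_s$ and the transition is Markov, the law of the future trajectory conditioned on $(s_t,a_t)$ and $t-t_i$ is independent of the past segment; it then expands $\gQ^{\pi}(\tau_{t_i-c:t}\circ(s_t,a_t))$ into the expected (discounted) current-interval reward $r(\tau_{t_i-c:t_{i+1}})$ plus a residual $K(s_t,a_t,t_i)$ that is claimed to be past-independent, and finally invokes the \cond~condition on the reward term. Up to notation, your $U$ plus the past-independent part of the bookkeeping is the paper's $K(s_t,a_t,t_i)$, and your $R_{\mathrm{head}}$ is the paper's $r(\tau_{t_i-c:t_{i+1}})$ term; if anything, you are more careful than the paper for $c>0$, where the \emph{next} interval's reward window can also reach back before $t$ and the paper silently lumps it into the ``irrelevant'' residual.

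Where you and the paper part ways is exactly the step you flag as the main obstacle, and your hesitation is well founded: the paper has no coupling argument, no preorder construction, and no control of magnitudes there. It disposes of the step in one sentence --- ``under the \cond~condition, the order of $r(\tau_{t_i-c:t_{i+1}})$ is invariant over $\tau_{t_i-c:t}$; as a result, the order of $\gQ^{\pi}$ among $a$ is also invariant.'' As you point out, this inference is not licensed by bare \cond: the condition pins down only the \emph{signs} of reward differences, while the two actions induce different future laws and an action-dependent (though past-independent) offset $K(s_t,a_t,t_i)-K(s_t,a_t',t_i)$. Concretely, with fixed interval length $2$, two feasible pasts, rewards giving $a_t$ an advantage of $0.1$ over $a_t'$ under one past and $100$ under the other (both positive, so \cond~holds), and continuation values favoring $a_t'$ by $1$, the $\gQ$-ordering flips between the two pasts; so the asserted implication can actually fail under the literal \cond~definition, not merely resist proof. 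The ``structural handle slightly beyond bare \cond'' that you ask for is precisely the Strong \cond~condition the paper itself introduces later in the appendix (past-invariance of reward \emph{differences}, not just of their order): under it, $\gQ^{\pi}(\tau\circ(s_t,a))-\gQ^{\pi}(\tau'\circ(s_t,a))$ is an action-independent constant (because $q_n$ is exogenous), so the action-differences coincide for the two pasts --- the same cancellation that makes your \task~special case transparent. In short, your proposal reproduces the paper's argument and correctly isolates the one step the paper does not actually justify.
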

\proof
Since $\pi(a_t|s_t, t-t_i) \in \Pi_s$, for any $s_t, a_t$, the distribution of the remaining trajectories $\rho^{\pi}(\tau_{t:T}, n_{t:T}|s_t, a_t, t_i)$ is irrelevant with $\tau_{t_i-c:t}$ . Thus, we have
\begin{align*}
    \gQ^{\pi}(\tau_{t_i-c:t} \circ (s_t, a_t)) = & \sum_{\tau_{t:\infty}, n_{t:\infty}}\rho^{\pi}(\tau_{t:\infty}, n_{t:\infty}|s_t, a_t, t_i) \left( \sum_{j=i}^{\infty} \gamma^{t_{j+1}-t-1}r(\tau_{t_j-c:t_{j+1}}) \right) \\
    = & \sum_{t_{i+1}} \gamma^{t_{i+1}-t-1} q_n(n_i = t_{i+1} - t_i|n_i \geq t - t_i) \sum_{\tau_{t:t_{i+1}}} \rho^{\pi}(\tau_{t:t_{i+1}} \circ s_{t_{i+1}}|s_t, a_t, t_i)  \\
    &\cdot \left(r(\tau_{t_i-c:t_{i+1}}) + \gamma \sum_{a_{t_{i+1}}}\pi(a_{t_{i+1}}|s_{t_{i+1}}, 0)\gQ^{\pi}(\tau_{t_{i+1}-c:t_{i+1}} \circ (s_{t_{i+1}}, a_{t_{i+1}})) \right) \\
    \propto & \sum_{t_{i+1}, \tau_{t:t_{i+1}}} q_n(n_i = t_{i+1} - t_i|n_i \geq t - t_i)\rho^{\pi}(\tau_{t:t_{i+1}}|s_t, a_t, t_i) r(\tau_{t_i-c:t_{i+1}}) + K(s_t, a_t, t_i)
\end{align*}

The last term denotes the residual part which is irrelevant with $\tau_{t_i:t}$. Besides, under the \cond~condition, the order of $r(\tau_{t_i-c:t_{i+1}})$ is invariant over $\tau_{t_i-c:t}$. As a result, the order of $\gQ^{\pi}(\tau_{t_i-c:t} \circ (s_t, a))$ among $a$ is also invariant of $\tau_{t_i-c:t}$. This completes the proof.
\endproof

\begin{remark}
\Factref{fact:monotone} holds as then special case of \Factref{fact:monotone-c} when $c=0$.
\end{remark}

\subsection{Proof of \Propref{prop:pit}}

\begin{prop}[\Propref{prop:pit} with General Reward Function] \label{prop:pit-c}
For any policy $\pi_k \in \Pi_s$, the policy iteration w.r.t.~$\mathcal{Q}^{\pi_k}$ (\Eqref{equ:q_tau-c}) produces policy $\pi_{k+1} \in \Pi_s$ such that $\forall \tau_{t_i-c:t+1}$, it holds that
\begin{align*}
    \mathcal{Q}^{\pi_{k+1}}(\tau_{t_i-c:t+1}) \geq \mathcal{Q}^{\pi_k}(\tau_{t_i-c:t+1}) ,
\end{align*}
which implies that $\mathcal{J}(\pi_{k+1}) \geq \mathcal{J}(\pi_k)$. 
\end{prop}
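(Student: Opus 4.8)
The plan is to adapt the classical policy-improvement argument for MDPs, with the Past-Invariant structure (via \Factref{fact:monotone-c}) supplying the one ingredient that is not automatic in the non-Markovian setting. First I would make precise what ``policy iteration w.r.t.~$\mathcal{Q}^{\pi_k}$'' means: for each pair $(s_t, t-t_i)$ we let $\pi_{k+1}(\cdot \mid s_t, t-t_i)$ be greedy (or merely improving) with respect to $\mathcal{Q}^{\pi_k}(\tau_{t_i-c:t}\circ(s_t,\cdot))$. A priori this depends on the history $\tau_{t_i-c:t}$, so it would not define a member of $\Pi_s$. This is exactly where \Factref{fact:monotone-c} is used: since the ordering of $\mathcal{Q}^{\pi_k}(\tau_{t_i-c:t}\circ(s_t,a))$ over actions $a$ is invariant to $\tau_{t_i-c:t}$, the greedy action (indeed the whole preference order) at $(s_t, t-t_i)$ is the same for every feasible history, so $\pi_{k+1}$ is well-defined and lies in $\Pi_s$.

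The second step is the one-step improvement inequality, which I would state in the strong, \emph{simultaneous} form: for every feasible history $\tau_{t_i-c:t}$,
\begin{align*}
\E_{a\sim\pi_{k+1}}\!\big[\mathcal{Q}^{\pi_k}(\tau_{t_i-c:t}\circ(s_t,a))\big] \;\geq\; \E_{a\sim\pi_k}\!\big[\mathcal{Q}^{\pi_k}(\tau_{t_i-c:t}\circ(s_t,a))\big].
\end{align*}
For the greedy update this is immediate for each fixed history, and \Factref{fact:monotone-c} guarantees it holds for \emph{all} histories with the \emph{same} $\pi_{k+1}$; this simultaneity is what later makes the telescoping legitimate.

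The third step is the unrolling. Using the Bellman-type recursion for $\mathcal{Q}^{\pi}$ derived in \Eqref{equ:extend} (the decomposition into the ``interval continues'' and ``interval ends'' cases weighted by $q_n$), I would expand $\mathcal{Q}^{\pi_k}(\tau_{t_i-c:t+1})$ one step, replace the inner expectation under $\pi_k$ by the larger expectation under $\pi_{k+1}$ using Step 2, and then re-expand the resulting $\mathcal{Q}^{\pi_k}$ at the newly reached (history, state) pairs. Iterating this substitution, and using $\gamma<1$ to control the tail so the infinite unrolling converges (the same $\gamma$-contraction already invoked in \Factref{fact:fixpoint-c}), yields
\begin{align*}
\mathcal{Q}^{\pi_k}(\tau_{t_i-c:t+1}) \;\leq\; \mathcal{Q}^{\pi_{k+1}}(\tau_{t_i-c:t+1})
\end{align*}
for every $\tau_{t_i-c:t+1}$. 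Finally, writing $\mathcal{J}(\pi)=\E_{s_0}\E_{a_0\sim\pi}[\mathcal{Q}^{\pi}((s_0,a_0))]$ and combining this pointwise bound with one more application of Step 2 at the initial step gives $\mathcal{J}(\pi_{k+1})\ge\mathcal{J}(\pi_k)$.

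I expect the main obstacle to be Step 2 in its simultaneous form, i.e.~justifying that a single history-independent policy $\pi_{k+1}\in\Pi_s$ improves on $\pi_k$ at \emph{every} history at once. This is precisely where the Past-Invariant assumption is indispensable: without \Factref{fact:monotone-c} the greedy action at $(s_t,t-t_i)$ could differ across histories, no $\Pi_s$ policy would dominate $\pi_k$ everywhere, and the telescoping in Step 3 would collapse. A secondary but routine point is the bookkeeping of the signal-interval indices and the $q_n$ conditioning when the unrolling crosses interval boundaries.
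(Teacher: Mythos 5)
Your proposal is correct and follows essentially the same route as the paper's proof: well-definedness of the greedy update in $\Pi_s$ via \Factref{fact:monotone-c}, a one-step improvement inequality inserted into the Bellman-type recursion of \Eqref{equ:extend}, iterative unrolling (with $\gamma<1$ controlling the tail) to get the pointwise bound, and the initial-state expectation to conclude $\mathcal{J}(\pi_{k+1})\geq\mathcal{J}(\pi_k)$. The paper's argument is exactly this, written out as the chain of inequalities in \Eqref{equ:extendpit}.
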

\proof
Under \Factref{fact:monotone-c}, the new policy iteration is defined formally as following
\begin{align*}
    \pi_{k+1}(a_t|s_t, t-t_i) = \argmax_{a_t} \gQ^{\pi_k}(\tau_{t-c:t} \circ (s_t, a_t))
\end{align*}
where $\tau_{t-c:t}$ is any trajectory segments feasible for $s_t$ under the dynamics.
If there are multiple maximums, we can assign arbitrary probability among these $a_t$s. Following \Eqref{equ:extend}, we have $\forall \tau_{t_i-c:t}$
\begin{align}
    \gQ^{\pi_k}(\tau_{t_i-c:t}) & = ~q_n(n_i \neq t-t_i|n_i \geq t-t_i) \left( \gamma \sum_{s_t, a_t} p(s_t|s_{t-1}, a_{t-1})\pi_k(a_t|s_t, t-t_i) \gQ^{\pi_k}(\tau_{t_i-c:t} \circ (s_t, a_t))\right) + \nonumber\\
    & q_n(n_i=t-t_i|n_i \geq t-t_i)\left(r(\tau_{t_i-c:t}) + \gamma \sum_{s_t, a_t} p(s_t|s_{t-1}, a_{t-1})\pi_k(a_t|s_t, 0) \gQ^{\pi_k}(\tau_{t-c:t} \circ (s_t, a_t)) \right) \nonumber \\
    & \leq ~q_n(n_i \neq t-t_i|n_i \geq t-t_i) \left( \gamma \sum_{s_t, a_t} p(s_t|s_{t-1}, a_{t-1})\pi_{k+1}(a_t|s_t, t-t_i) \gQ^{\pi_{k}}(\tau_{t_i-c:t} \circ (s_t, a_t))\right) + \nonumber \\
    & q_n(n_i=t-t_i|n_i \geq t-t_i)\left(r(\tau_{t_i-c:t}) + \gamma \sum_{s_t, a_t} p(s_t|s_{t-1}, a_{t-1})\pi_{k+1}(a_t|s_t, 0) \gQ^{\pi_{k}}(\tau_{t-c:t} \circ (s_t, a_t)) \right) \label{equ:extendpit}
\end{align}
Then, we can iteratively extend each $\gQ^{\pi_k}$ term in \Eqref{equ:extendpit} and get $\forall \tau_{t_i-c:t}$
\begin{align*}
    \gQ^{\pi_k}(\tau_{t_i-c:t}) \leq \gQ^{\pi_{k+1}}(\tau_{t_i-c:t})
\end{align*}
Noticing that the inequality is strict if any step of the inequality in the iterations is strict. Furthermore, by definition, $\gJ(\pi) = \sum_{s_0, a_0}p(s_0)\pi(a_0|s_0, 0)\gQ^{\pi}(s_0, a_0)$\footnote{Here, we omit specifying the $c$-step padding.}, we have
\begin{align*}
    \gJ(\pi_k) & \leq \sum_{s_0, a_0}p(s_0)\pi_{k+1}(a_0|s_0, 0)\gQ^{\pi_k}(s_0, a_0) \\
    & \leq \sum_{s_0, a_0}p(s_0)\pi_{k+1}(a_0|s_0, 0)\gQ^{\pi_{k+1}}(s_0, a_0) \\
    & \leq \gJ(\pi_{k+1})
\end{align*}
This completes the proof.
\endproof

\begin{remark}
\Propref{prop:pit} holds as the special case of \Propref{prop:pit-c} when $c=0$.
\end{remark}

\subsection{Discussion of Off-Policy Bias}
This following example illustrates the off-policy bias concretely. We consider the case for a \task~\cond-\mdp-0.

The signal interval length is fixed as $n+1$ and any trajectory ends after one interval. The reward function is in \task~over $r(s, a)$. When learning the critic (tabular expression) via \Eqref{equ:q_ori_mse}, the $Q$-values at the last step $n$ are \footnote{Without loss of generality, the discussion is simplified by assuming $s_n$ will only appear at the end of a reward interval.}
\begin{align}
    & Q_{\phi}(s_n, a_n) = \sum_{\tau_{0:n}} \rho^{\bm \beta}(\tau_{0:n}|s_n, a_n) r(\tau_1) \nonumber \\
    & =  \underbrace{\sum_{\tau_{0:n}} \rho^{\bm \beta}(\tau_{0:n}|s_n, a_n)  \left( \sum_{t=0}^{n-1} r(\tau_{t:t+1}) \right)}_{\widetilde{Q^{\bm \beta}}(s_n, a_n)} + r(s_n, a_n) \label{equ:fmdp_bias}
\end{align}
In \Eqref{equ:fmdp_bias}, $\rho^{\bm \beta}(\tau_{0:n}|s_n, a_n)$ denotes the probability distribution of $\tau_{0:n}$ collected under the policy sequence $\bm \beta$ conditioning on that $\tau_{n:n+1} = (s_n, a_n)$. Besides, we denote $\rho^{\beta_k}(\tau_{0:n}|s_n)$ as the distribution under policy $\beta_k$ conditioning on $s_n$ at the last step. Obviously, $\rho^{\beta_k}$ is independent of last step's action $a_n$ as $\pi \in \Pi_s$. By Bayes rule, we have the following relation
\begin{align}
    \rho^{\bm \beta}(\tau_{0:n}|s_n, a_n) = \frac{\sum_{k=1}^K \rho^{\beta_k}(\tau_{0:n}|s_n) \beta_k(a_n|s_n)}{\sum_{k=1}^K \beta_k(a_n|s_n)}
\end{align}

In other word, $\rho^{\bm \beta}(\cdot|s_n, a_n)$ is a weighted sum of $\{\rho^{\beta_k}(\cdot|s_n)\}$. The weights are different between $a_n$s since the behavior policies $\{\beta_k\}$ are different. Consequently, $\widetilde{Q^{\bm \beta}}(s_n, a_n)$ will also vary between $a_n$s and it is hard to be quantified in practice. Unfortunately, we may want $Q_{\phi}(s_n, a_n) \propto r(s_n, a_n)$ for unbiased credit assignment the actions at the last step. Thus, updating the policy with approximated $Q_{\phi}(s_n, a_n)$ will suffer from the off-policy bias term $\widetilde{Q^{\bm \beta}}(s_n, a_n)$.

\subsection{Discussion of Fixed Point Bias}

Even with on-policy samples in $D$, critic learning via \Eqref{equ:q_ori_mse} can still be erroneous. We formally state the observation via a toy \task~\cond-\mdp-0.
\begin{center}
\begin{tikzpicture}[scale=0.15]
\tikzstyle{every node}+=[inner sep=0pt]
\draw [black] (26.2,-21) circle (3);
\draw (26.2,-21) node {$A$};
\draw [black] (26.2,-31.4) circle (3);
\draw (26.2,-31.4) node {$B$};
\draw [black] (36.5,-16.7) circle (3);
\draw (36.5,-16.7) node {$C$};
\draw [black] (47.5,-16.7) circle (3);
\draw (47.5,-16.7) node {$C_0$};
\draw [black] (47.5,-16.7) circle (2.4);
\draw [black] (36.5,-26.2) circle (3);
\draw (36.5,-26.2) node {$D$};
\draw [black] (47.5,-26.2) circle (3);
\draw (47.5,-26.2) node {$D_0$};
\draw [black] (47.5,-26.2) circle (2.4);
\draw [black] (28.97,-19.84) -- (33.73,-17.86);
\fill [black] (33.73,-17.86) -- (32.8,-17.7) -- (33.19,-18.63);
\draw (30.03,-18.33) node [above] {$a_0$};
\draw [black] (28.88,-22.35) -- (33.82,-24.85);
\fill [black] (33.82,-24.85) -- (33.33,-24.04) -- (32.88,-24.93);
\draw (30,-24.11) node [below] {$a_1$};
\draw [black] (39.5,-16.7) -- (44.5,-16.7);
\fill [black] (44.5,-16.7) -- (43.7,-16.2) -- (43.7,-17.2);
\draw (42,-17.2) node [below] {$c$};
\draw [black] (39.5,-26.2) -- (44.5,-26.2);
\fill [black] (44.5,-26.2) -- (43.7,-25.7) -- (43.7,-26.7);
\draw (42,-26.7) node [below] {$d$};
\draw [black] (28.88,-30.05) -- (33.82,-27.55);
\fill [black] (33.82,-27.55) -- (32.88,-27.47) -- (33.33,-28.36);
\draw (32.34,-29.3) node [below] {$b$};
\end{tikzpicture}
\end{center}
where for state $B, C, D$, there is only one action. The agent only needs to decide among $a_0, a_1$ at state $A$. $n$ is fixed as $2$ and the initial state distribution $p_0(A) = p_0(B) = 0.5$. The reward has \task~over the following per-step reward
\begin{align*}
    & r(C, c) = r(D, d) = 0 \\
    & r(A, a_0) = 0.01, r(A, a_1) = 1 \\
    & r(B, b) = -1
\end{align*}
\begin{restatable}{fact}{onpolicy} \label{fact:onpolicy}
For the above \cond-\mdp-0 and any initialization of the policy, value evaluation ($Q_{\phi}(s_t, a_t)$ in tabular form) via \Eqref{equ:q_ori_mse} with on-policy data and policy iteration w.r.t $Q_{\phi}(s_t, a_t)$  will converge to a sub-optimal policy with $\gJ(\pi) = -0.495\gamma$. In contrast, our algorithm can achieve the optimal policy with $\gJ(\pi)=0$.
\end{restatable}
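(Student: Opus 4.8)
The plan is to exploit the fact that the only genuine decision in this \cond-\mdp-$0$ is the choice between $a_0$ and $a_1$ at state $A$; everywhere else the action is forced. Since $n=2$ and $C_0,D_0$ are absorbing, every trajectory is a single length-$2$ interval, and the three reachable rollouts are $A\!\to\!C\!\to\!C_0$, $A\!\to\!D\!\to\!D_0$ and $B\!\to\!D\!\to\!D_0$, with interval rewards $r(\tau_1)$ equal to $0.01$, $1$ and $-1$ respectively (the per-step reward at $C,D$ being $0$). Because the reward is revealed at $t=t_2-1=1$, the corresponding returns are $0.01\gamma$, $\gamma$ and $-\gamma$. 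First I would parametrize any policy by $q:=\pi(a_1\mid A,0)\in[0,1]$ (the action at $B$, and at $C,D$, being deterministic), so that $\gJ(\pi)=\tfrac12\big(q\gamma+(1-q)\,0.01\gamma\big)+\tfrac12(-\gamma)$; this gives $\gJ=0$ at the optimum $q=1$ and $\gJ=-0.495\gamma$ at $q=0$.

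For the naive critic of \Eqref{equ:q_ori_mse} I would compute the on-policy tabular fixed point by solving backward. Since $C_0,D_0$ are terminal, the fixed point sets each $Q_\phi(s_1,\alpha_1)$ to the conditional expectation of $R_1=r(\tau_1)$ given $(s_1,\alpha_1)$ under the on-policy visitation. The state $C$ is reachable only from $A$ via $a_0$, so $Q_\phi(C,c)=0.01$; the state $D$ is reached from $A$ (flux $\tfrac12 q$) and from $B$ (flux $\tfrac12$), and a Bayes/flux computation gives the conditional mixture
\begin{align*}
 Q_\phi(D,d)=\frac{q}{q+1}\cdot 1+\frac{1}{q+1}\cdot(-1)=\frac{q-1}{q+1}.
\end{align*}
Propagating one more step (with $R_0=0$) yields $Q_\phi(A,a_0)=0.01\gamma$ and $Q_\phi(A,a_1)=\gamma\,\frac{q-1}{q+1}$. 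The crucial observation is that $\frac{q-1}{q+1}\le 0<0.01$ for every $q\in[0,1]$, so $Q_\phi(A,a_1)<Q_\phi(A,a_0)$ regardless of the current policy. Hence one greedy improvement step sends the policy to $q=0$ from any initialization, and $q=0$ is itself a fixed point; the iteration therefore converges to the deterministic choice $a_0$ with $\gJ(\pi)=-0.495\gamma$.

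For our algorithm I would invoke \Factref{fact:fixpoint} (the critic converges to $\gQ^{\pi}$) together with \Propref{prop:pit} (monotone improvement toward the best policy in $\Pi_s$). The point is that $\gQ^{\pi}$ is defined on the in-interval history $\tau_{t_i:t+1}$, so at $A$ it does not average over the origin of $D$: a direct evaluation gives $\gQ^{\pi}((A,a_1))=\gamma$ and $\gQ^{\pi}((A,a_0))=0.01\gamma$ for any $\pi$, so the improved policy correctly selects $a_1$, and by \Propref{prop:pit} the iteration reaches the optimal policy $q=1$ with $\gJ=0$. The main obstacle is the second paragraph: I must justify that the on-policy MSE fixed point really is the conditional-expectation mixture above --- i.e.\ that the history-dependence of $R_1$ forces $Q_\phi(D,d)$ to blend the $A$-origin reward $+1$ with the $B$-origin reward $-1$ --- and that the resulting inequality $Q_\phi(A,a_1)<Q_\phi(A,a_0)$ holds uniformly in $q$, which is precisely what pins the naive scheme to the suboptimal fixed point. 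The remaining claims then follow directly from the already-established \Factref{fact:fixpoint} and \Propref{prop:pit}.
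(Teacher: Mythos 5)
Your proposal is correct and follows essentially the same route as the paper's proof: compute the tabular on-policy fixed point of \Eqref{equ:q_ori_mse}, observe that $Q_{\phi}(A,a_1) < Q_{\phi}(A,a_0)$ for every policy so greedy iteration is pinned to $a_0$ with $\gJ(\pi) = -0.495\gamma$, and then contrast with the new $\gQ$-function, which assigns $\gamma$ versus $0.01\gamma$ and hence selects $a_1$, giving $\gJ=0$. The only (immaterial) difference is that you compute $Q_{\phi}(D,d)$ as the properly normalized conditional mean $\frac{q-1}{q+1}$ --- which is in fact the exact MSE minimizer --- whereas the paper writes the unnormalized weighted sum $\frac{1}{2}(p-1)$; both are nonpositive for all $p\in[0,1]$, so the decisive inequality and the conclusion are identical.
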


\proof
For any policy $\pi$, we denote $p = \pi(a_1|A)$. Then, if we minimize \Eqref{equ:q_ori_mse} with on-policy samples, we will have $Q_{\phi}(C, c) = 0.01, Q_{\phi}(A, a_0) = 0.01\gamma$. Besides, as initial state distribution are uniform, we have
\begin{align*}
    Q_{\phi}(D, d) = \frac{1}{2}(p - 1), ~~ Q_{\phi}(A, a_1) = \frac{1}{2}\gamma (p - 1)
\end{align*}
Since $Q_{\phi}(A, a_1) < Q_{\phi}(A, a_0)$ for any $p$, policy iteration will converges to a sub-optimal policy that $\pi(a_0|A) = 1$.The sub-optimal policy has $\gJ(\pi) = -0.495\gamma$. Noticing that this holds even if the initial policy is already the optimal. 

For our algorithm, we have 
\begin{align*}
    Q_{\phi}(A, a_1) = \gamma Q_{\phi}(A, a_0, D, d) = \gamma
\end{align*}

For any $\gamma > 0$ and any $p$, a single policy iteration will have the optimal policy with but the optimal has  $\gJ(\pi^*)=0$.
\endproof

\subsection{Discussion of Optimal Policy in \cond-\mdp}

In our work on \cond-\mdp, we only focus on the optimization in $\Pi_s$. However, in some bizarre cases, the optimal policy $\pi^* \not\in \Pi_s$. For example, consider the following \cond-\mdp-0 where $n$ if fixed as 2.

\begin{center}
 \begin{tikzpicture}[scale=0.17]
 \tikzstyle{every node}+=[inner sep=0pt]
 \draw [black] (27.4,-35) circle (3);
 \draw (27.4,-35) node {$A_l$};
 \draw [black] (35.4,-29) circle (3);
 \draw (35.4,-29) node {$B$};
 \draw [black] (43.4,-22.7) circle (3);
 \draw [black] (43.4,-22.7) circle (2.5);
 \draw (43.4,-22.7) node {$C_{+}$};
 \draw [black] (43.4,-35) circle (3);
 \draw (43.4,-35) node {$C_{-}$};
 \draw [black] (54.9,-35) circle (3);
 \draw [black] (54.9,-35) circle (2.5);
 \draw (54.9,-35) node {$T$};
 \draw [black] (27.4,-22.7) circle (3);
 \draw (27.4,-22.7) node {$A_b$};
 \draw [black] (29.76,-24.56) -- (33.04,-27.14);
 \fill [black] (33.04,-27.14) -- (32.72,-26.26) -- (32.11,-27.04);
 \draw (30.39,-26.35) node [below] {$a_{10}$};
 \draw [black] (29.8,-33.2) -- (33,-30.8);
 \fill [black] (33,-30.8) -- (32.06,-30.88) -- (32.66,-31.68);
 \draw (32.4,-32.5) node [below] {$a_{0.1}$};
 \draw [black] (37.76,-27.14) -- (41.04,-24.56);
 \fill [black] (41.04,-24.56) -- (40.11,-24.66) -- (40.72,-25.44);
 \draw (40.41,-26.35) node [below] {$b_{+1}$};
 \draw [black] (37.8,-30.8) -- (41,-33.2);
 \fill [black] (41,-33.2) -- (40.66,-32.32) -- (40.06,-33.12);
 \draw (38.4,-32.5) node [below] {$b_{-1}$};
 \draw [black] (46.4,-35) -- (51.9,-35);
 \fill [black] (51.9,-35) -- (51.1,-34.5) -- (51.1,-35.5);
 \draw (49.15,-35.5) node [below] {$\tau$};
 \end{tikzpicture}
\end{center}

The initial state is uniformly chosen from $\{A_b, A_l\}$  and $C_+, T$ are terminal states. The reward function for the first interval is $r(a_i, b_j) = i \times j$ (Past-Invariant) and $r(\tau) = 5$. The optimal policy at $B$ should be $\pi(\cdot | A_{b/l}, B) = b_{+1/-1}$ which is \emph{not} in $\Pi_s$.

We have not covered the potential non-optimal issue in this work and leave developing general algorithms for acquiring optimal policy for any \cond-\mdp-$c$ or even \mdp-$c$ as future work. As mentioned above, most real world reward functions still satisfy that the optimal policy belongs to $\Pi_s$. This makes our discussion in $\Pi_s$ still practically meaningful. Additionally, we put forward a sub-class of \cond-\mdp-c which guarantees that there exists some optimal policy $\pi^* \in \Pi_s$.

\begin{definition}[Strong \cond-\mdp-c]
A Strong Past-Invariant \mdp-c is a \cond-\mdp-c~ whose reward $r$ satisfies a stronger condition than the Past-Invariant (\cond) condition: \\
$\forall$ trajectory segments $\tau_1$ and $\tau_2$ of the same length (no less than $c$), and for any two equal-length trajectory segments $\tau_1'$ and $\tau_2'$ such that the concatenated trajectories $\tau_a \circ \tau_b'$ are feasible under the transition dynamics $p$ for all $a, b \in \{1, 2\}$, it holds that
\begin{align*}
r(\tau_1 \circ \tau_1') - r(\tau_1 \circ \tau_2') = r(\tau_2 \circ \tau_1') - r(\tau_2 \circ \tau_2') .
\end{align*}
\end{definition}
Namely, the discrepancies are invariant of the past history $\tau_{1,2}$ in addition to only the order. A straightforward example is the weighted linear sum of per-step rewards
\begin{align*}
    r(\tau_{t-c:t+n}) = \sum_{i=t-c}^{t+n-1} w_{i-t} r(s_i, a_i), ~~~ 0 \leq w_{i-t} \leq 1
\end{align*}
The weight mimics the scenario that the per-step reward for the $i$\shortn$t^{\text{th}}$ step in the reward interval will be lost with some probability $w_{i-t}$. Furthermore, Strong \cond-\mdp-c satisfies

\begin{restatable}{fact}{spimdp}
For any Strong \cond-\mdp-c, there exists some optimal policy $\pi^* \in \Pi_s$.
\end{restatable}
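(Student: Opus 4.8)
The plan is to upgrade the monotonicity of \Factref{fact:monotone-c} from \emph{order}-invariance to \emph{difference}-invariance, and then feed this into a Bellman-optimality argument so that a greedy policy can be read off inside $\Pi_s$. By \Factref{fact:policy-c} the global optimum over $\Pi_{\text{all}}$ is already attained in $\Pi_{\tau,c}$, so it suffices to produce a policy in $\Pi_s$ whose value matches the optimal value over all history-dependent policies. The whole argument rests on the additive-separability consequence of the strong condition: for a fixed cut-boundary state $s_t$ and a fixed continuation length $m$, the strong identity applied to the equal-length future segments says precisely that $r(\tau\circ\sigma)=h_{s_t}^{(m)}(\tau)+k_{s_t}^{(m)}(\sigma)$ on that feasibility class (the standard ``constant cross-differences imply additive separability'' fact). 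I only ever need this equal-length version, so the random interval length $n_i\sim q_n$ is absorbed by grouping continuations according to their interval-end time $t_{i+1}$.

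With this decomposition in hand, I would rerun the expansion of $\gQ^\pi$ from the proof of \Factref{fact:monotone-c}, but track the past-dependent term exactly rather than passing to orders. The history $\tau_{t_i-c:t}$ enters $\gQ^\pi(\tau_{t_i-c:t}\circ(s_t,a_t))$ only through the interval reward $r(\tau_{t_i-c:t}\circ\sigma)$; substituting the decomposition and summing $\sigma$ within each fixed interval-end group, the coefficient of $h_{s_t}(\tau_{t_i-c:t})$ is $\sum_\sigma\rho^\pi(\sigma\mid s_t,a_t,t_i)=1$, which is independent of $a_t$. Hence in the action-difference $\gQ^\pi(\tau_{t_i-c:t}\circ(s_t,a_t))-\gQ^\pi(\tau_{t_i-c:t}\circ(s_t,a_t'))$ the entire $h_{s_t}(\tau_{t_i-c:t})$ contribution cancels group-by-group, and the remainder depends only on $(s_t,a_t,a_t',t-t_i)$. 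This gives the key difference-invariance: not merely the ordering but the \emph{gaps} of $\gQ^\pi$ at $s_t$ are independent of $\tau_{t_i-c:t}$.

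Finally I would promote this from a fixed $\pi$ to the optimal value. Define the optimality operator on history-indexed $\gQ$ (the analogue of \Eqref{equ:q_tau-c} with $\max_{a}$ in place of the policy average) and show by induction that it preserves the separable form $\gQ(\tau_{t_i-c:t}\circ(s_t,a))=W(t-t_i)\,h_{s_t}(\tau_{t_i-c:t})+G(s_t,a,t-t_i)$: the $\max_a$ touches only the $G$-part, because the $h$-part is an $a$-independent offset, and the additively separable reward folds that offset into the $h$-function of the extended history at the next step. Convergence and uniqueness of the fixed point follow from the same $\gamma$-contraction used in \Factref{fact:fixpoint-c}. Since the optimal $\gQ^*$ has this form, $\argmax_a\gQ^*(\tau_{t_i-c:t}\circ(s_t,a))$ is independent of $\tau_{t_i-c:t}$; choosing $\pi^*(a\mid s_t,t-t_i)$ to realize this common argmax yields $\pi^*\in\Pi_s$ that is greedy for $\gQ^*$ at \emph{every} history, hence optimal over $\Pi_{\text{all}}$ by the standard performance-difference telescoping in which each one-step deviation term is nonpositive. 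An equivalent route is to run the policy iteration of \Propref{prop:pit-c} inside $\Pi_s$—now genuinely well defined, since difference-invariance makes the greedy action history-consistent—and argue its monotone, bounded limit is globally optimal by the same telescoping.

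The main obstacle I expect is the induction that the optimality operator preserves the separable form: one must verify that the $\max$ over the next action, taken over the \emph{extended} history, recombines cleanly with the within-interval reward, and handle the boundary cases of the $c$-step overlap, where the look-back window $\tau_{t_{i+1}-c:t_{i+1}}$ of the next interval can intrude into the current history so that $h$ and $k$ no longer separate cleanly near interval ends. Controlling these boundary terms, together with the grouping-by-length bookkeeping that is needed in order to invoke only the equal-length form of the strong condition, is where the real work lies; the remaining contraction and greedy-policy steps are routine once the separable structure is established.
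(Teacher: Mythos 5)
You should know at the outset that the paper never actually proves this fact: it is stated at the very end of \Apxref{Apx:proof} (after the definition of Strong \cond-\mdp-c) and no argument follows, so there is no authors' proof to compare against and your proposal must stand on its own. On its own terms, the core of your plan is right for small $c$. The Strong \cond~condition does yield additive separability $r(\tau\circ\sigma)=h^{(m)}_{s_t}(\tau)+k^{(m)}_{s_t}(\sigma)$ at every cut whose past segment has length at least $c$ (fixing the boundary state to settle feasibility), and comparing the decompositions obtained by cutting at $s_t$ and at $s_{t+1}$ gives the recursion $h^{(m)}_{s_{t+1}}(\tau\circ(s_t,a_t))=h^{(m+1)}_{s_t}(\tau)+g(s_t,a_t,s_{t+1})$, which is precisely what makes your Bellman-optimality induction close; the greedy policy read off from the separable $\gQ^*$ then lies in $\Pi_s$ and is optimal. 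For $c=0$, and also for $c=1$ (where a look-back window is a single step and therefore can never straddle known history and future), this outline can be turned into a complete proof.

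The genuine gap is exactly the issue you flag and then defer as bookkeeping: it is not bookkeeping, it is where the statement itself fails. The Strong \cond~condition constrains only cuts of depth $\geq c$, i.e., cuts lying inside the signal interval, and says nothing about interactions \emph{within} the $c$-step look-back window; your intermediate claim that ``the history enters $\gQ^{\pi}$ only through the current interval reward'' is already false for $c\geq 2$, and when the next interval's window $\tau_{t_{i+1}-c:t_{i+1}}$ reaches back past time $t$, your induction needs separability of $h$ at a cut of depth strictly less than $c$, which the hypothesis does not supply. No argument can supply it, because the fact is false there. Take a single state, two actions $\{0,1\}$, $c=2$, $q_n(1)=1$, and interval reward equal to the XOR $a_{t-2}\oplus a_{t-1}$ of the two look-back actions (add a small separable term in the interval's own step if you dislike degeneracy; nothing changes). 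Every admissible cut has past length $\geq 2$, so the future segment enters trivially and the Strong \cond~identity holds with both sides equal to $0$; yet optimal play must alternate actions, i.e., remember $a_{t-1}$, while every $\pi\in\Pi_s$ is here just a fixed action distribution (single state, $t-t_i\equiv 0$) and earns at most half the optimal per-step reward. Hence the statement is provable only for $c\leq 1$, or under a strengthened condition imposing the difference identity at \emph{all} cut depths (which the paper's motivating example, a weighted linear sum of per-step rewards, does satisfy). Incidentally, the same blind spot appears in the paper's own proof of \Factref{fact:monotone-c}: the residual $K(s_t,a_t,t_i)$ there is claimed history-independent, which silently assumes the next interval's look-back window never intrudes into $\tau_{t_i-c:t}$.
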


\section{Algorithm} \label{Apx:algo}
The following summarizes our algorithm for \cond-\mdp~with General Reward Function.

\begin{algorithm} 
\begin{algorithmic}[1]
   \STATE Choose $\gQ_{\phi}$ structure: $\gQ$-RNN, $Q$-HC-Pairwise-K, e.t.c
   \STATE Initialize $\pi_{\theta}, \gQ_{\phi}$ and target networks, $D \leftarrow \emptyset$
    \FOR{each env step}
        \STATE $a_t \sim \pi_{\theta}(\cdot|s_t, t-t_i)$
        \STATE Take $a_t$ and get $s_{t+1}, R_t$
        \STATE Add $(s_t, a_t, R_t, s_{t+1})$ to $D$
        \FOR{each gradient step}
            \STATE Sample a batch of $(\tau_{t_i-c:t+1}, R_t, s_{t+1})$ from $D$
            \STATE Update $\gQ_{\phi}$ via minimizing \Eqref{equ:implement} (with $\tau_{t_i-c:t}$) for $\gQ$-HC or \Eqref{equ:q_tau_mse-c} for $\gQ$-RNN
            \STATE Update $\pi_{\theta}$ via \Eqref{equ:pi_bq} for $\gQ$-HC or \Eqref{equ:pi_tau_gr} for $\gQ$-RNN
            \STATE Update the target networks
        \ENDFOR
    \ENDFOR
\end{algorithmic}
\caption{Algorithm with General Reward Function}
\label{alg:general}
\end{algorithm}
\section{Additional Results} \label{Apx:additional}
\subsection{Design Evaluation} 
Here, we show more empirical results omitted in \secref{sec:implement}. In \Figref{fig:all_n}, we compare different HC-decomposition algorithms ($\gQ$-HC) with $\gQ$-RNN and vanilla SAC on several \task~\cond-\mdp~tasks with different signal interval length distribution ($q_n$). The results are consistent with our discussions in \secref{sec:implement} for most cases.

\begin{figure*}[h]
\begin{center}
\centerline{\includegraphics[width=\linewidth, height=4.5in]{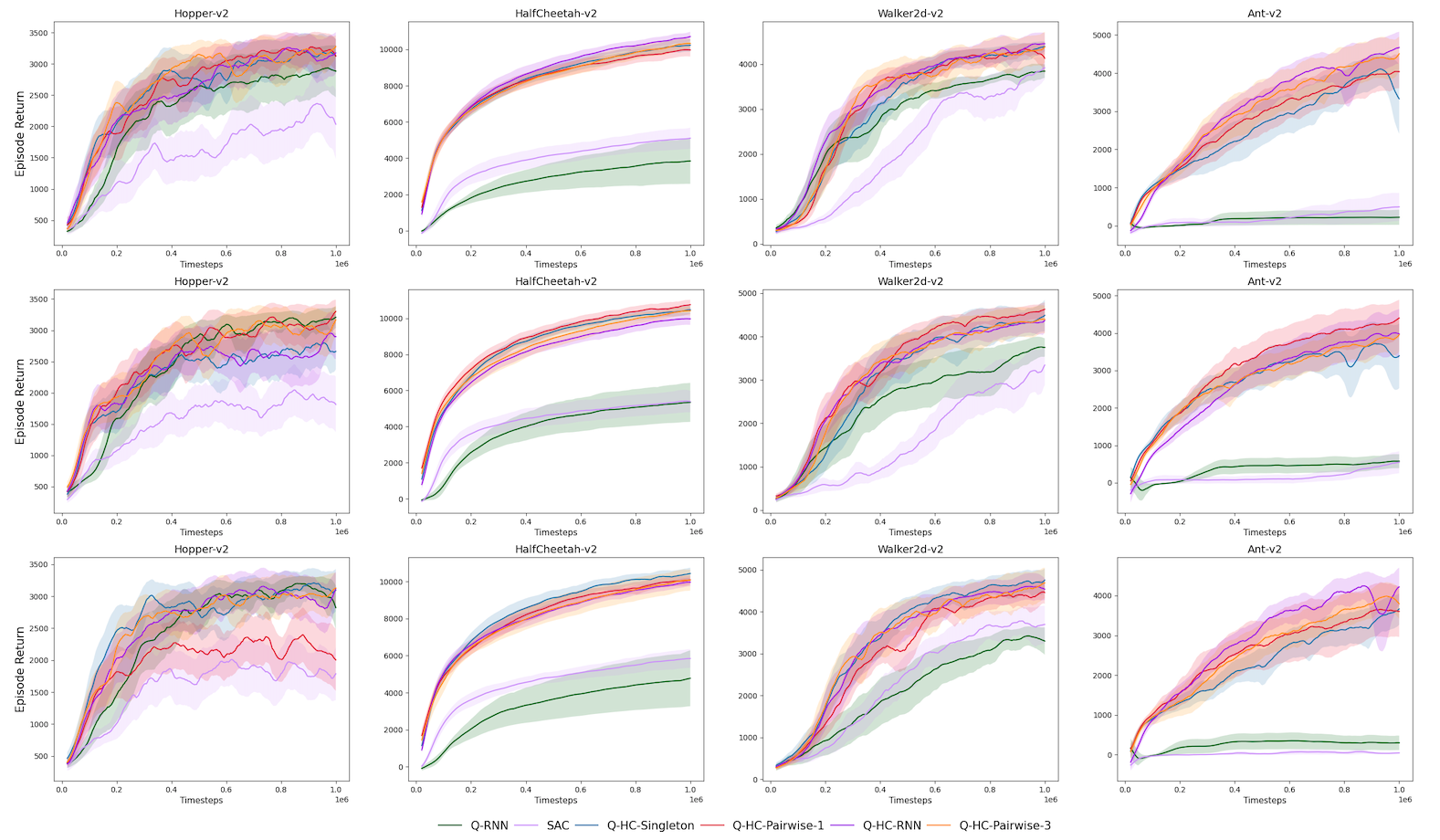}}
\caption{Comparisons of different implementations of $\gQ$-HC with $\gQ$-RNN and SAC. All tasks are \task~\cond-\mdp~tasks. The first line refers to the task with $n$ fixed as 20. The second line refers to the task with $n \sim U([15, 20])$. The third line refers to the task with $n \sim U([10, 20])$. All curves show the mean and one standard deviation of 6-7 seeds.}
\label{fig:all_n}
\end{center}
\vskip -0.3in
\end{figure*}

In \Figref{fig:all_ablate}, we show the ablation study on the regulation term $L_{\text{reg}}$ on all 4 tasks. We observe that in HalfCheetah-v2 and Ant-v2, the regulation plays an important role for the final performance while the regulation is less necessary for the others. Besides, for $\gQ$-HC algorithms with complex H-component architectures (i.e., $\gQ$-HC-RNN, $\gQ$-HC-Pairwise-3), the regulation is necessary while for simple structure like $\gQ$-HC-Singleton, regulation is less helpful. We suspect that the simple structure itself imposes implicit regulation during the learning.

\begin{figure*}[h]
\begin{center}
\centerline{\includegraphics[width=\linewidth, height=1.5in]{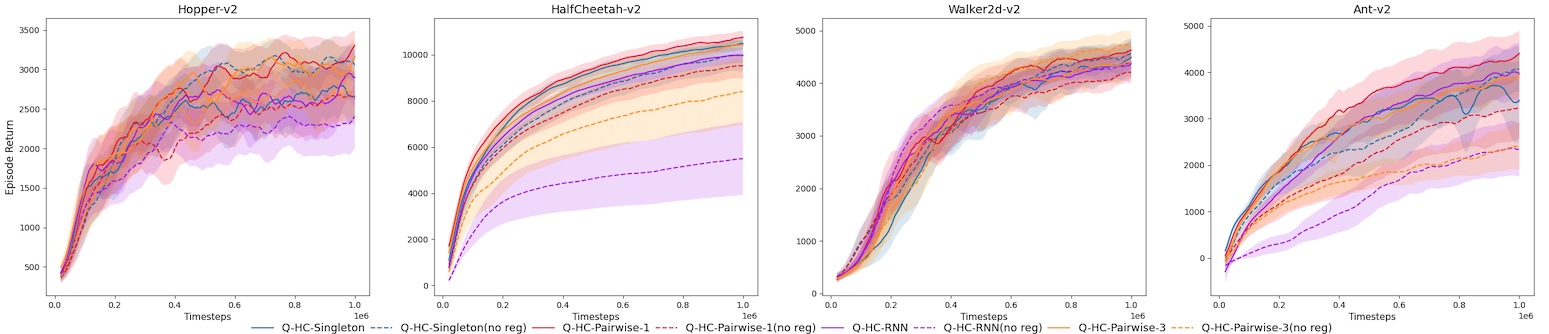}}
\caption{More ablations on $L_{\text{reg}}$ term. The task is a \task~\cond-\mdp~ with $n$ uniformly drawn from $15$ to $20$. Dashed lines are the no regularization version of the algorithm $\gQ$-HC with the same color. All curves show the mean and a standard deviation of 6-7 seeds.}
\label{fig:all_ablate}
\end{center}
\vskip -0.3in
\end{figure*}

As mentioned in \secref{sec:implement}, we also analyze the HC-decomposition architecture on \emph{non} \task~\cond-\mdp~taks based on OpenAI Gym. Two \emph{non} \task~tasks are included: \textit{Max} and \textit{Square}. For \textit{Max}, $r(\tau_{t:t+n}) = 10 \times \max_{i \in [t:t+n]} \{r(s_i, a_i) \}$ and $r(s_i, a_i)$ is the standard per-step reward in each task. For \textit{Square}, we denote $r_{\text{avg}}(\tau_{t:t+n}) = \frac{1}{n} \sum_{i \in [t:t+n]} r(s_i, a_i)$ and the reward is defined as follows. Noticing that these two tasks are still \cond-\mdp.
\begin{align*}
    r(\tau_{t:t+n}) = 4 \times
    \begin{cases}
    r_{\text{avg}}(\tau_{t:t+n}) & |r_{\text{avg}}(\tau_{t:t+n})| < 1. \\
    \text{sign}(r_{\text{avg}}(\tau_{t:t+n})) \times r^2_{\text{avg}}(\tau_{t:t+n}) & |r_{\text{avg}}(\tau_{t:t+n})| \geq 1.
    \end{cases}
\end{align*}
The results are shown in \Figref{fig:non_sum}. Clearly, $\gQ$-HC algorithms outperform $\gQ$-RNN and vanilla SAC on all tasks. This suggests the benefit of HC-decomposition over other variants is remarkable.

\begin{figure*}[h]
\begin{center}
\centerline{\includegraphics[width=\linewidth, height=3in]{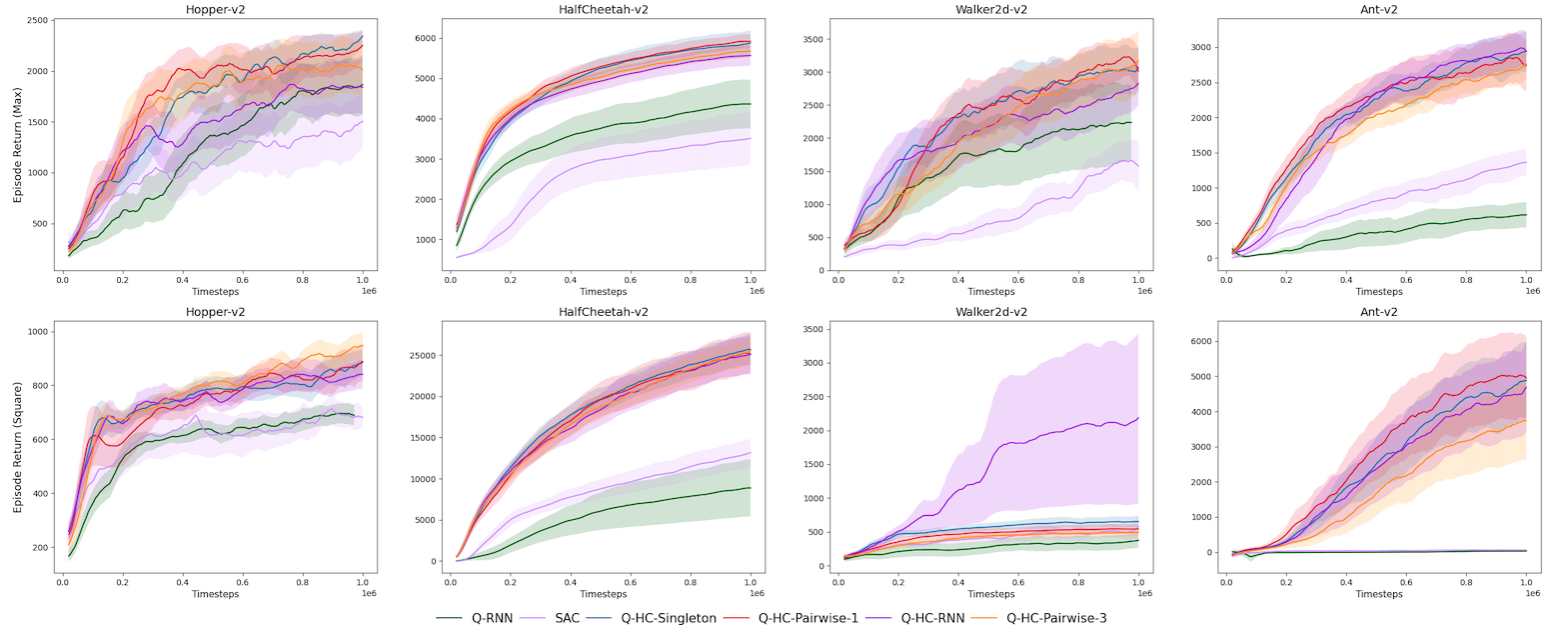}}
\caption{Experiments of \emph{non} \task~\cond-\mdp~tasks. The first line shows the result of \textit{Max} form and the second line shows the result of \textit{Square} form. The learning curves are mean and one standard deviation of 6-7 seeds.}
\label{fig:non_sum}
\end{center}
\vskip -0.3in
\end{figure*}

\subsection{Comparative Evaluation}
We show detailed results of our algorithms $\gQ$-HC and previous baselines on \task~\cond-\mdp~ tasks with General Reward Function. Please refer to \Apxref{Apx:proof} and \Apxref{Apx:algo} for detailed discussions. For environments with maximal $c$ overlapping steps, the reward function is formally defined as follows.
\begin{align*}
    r_c(\tau_{t_i-c:t_i+n_i}) = \sum_{i \in [t_i-c:t_i+n_i - c]} r(s_i, a_i)
\end{align*}
where $r(s_i, a_i)$ is the standard per-step reward. If $i < 0$, then $r(s_i, a_i) = 0$. In this definition, the reward is delayed by $c$ steps and thus the signal intervals are \emph{overlapped}. In \Figref{fig:overall_c}, we show the relative average performance of each algorithm w.r.t the Oracle SAC trained on the dense reward setting. Please refer to \Apxref{Apx:imp} for the exact definition of this metric. The results demonstrate the superiority of our algorithms over the baselines in the General Reward Function experiments. This supports our claim that our algorithmic framework and the approximation method (HC) can be extended naturally to the general definition, i.e., \cond-\mdp-c.

Besides, we show the detailed learning curves of all algorithms in tasks with different overlapping steps $c$ in \Figref{fig:all_c}.

\begin{figure*}[h]
\begin{center}
\centerline{\includegraphics[width=2.0in, height=2.2in]{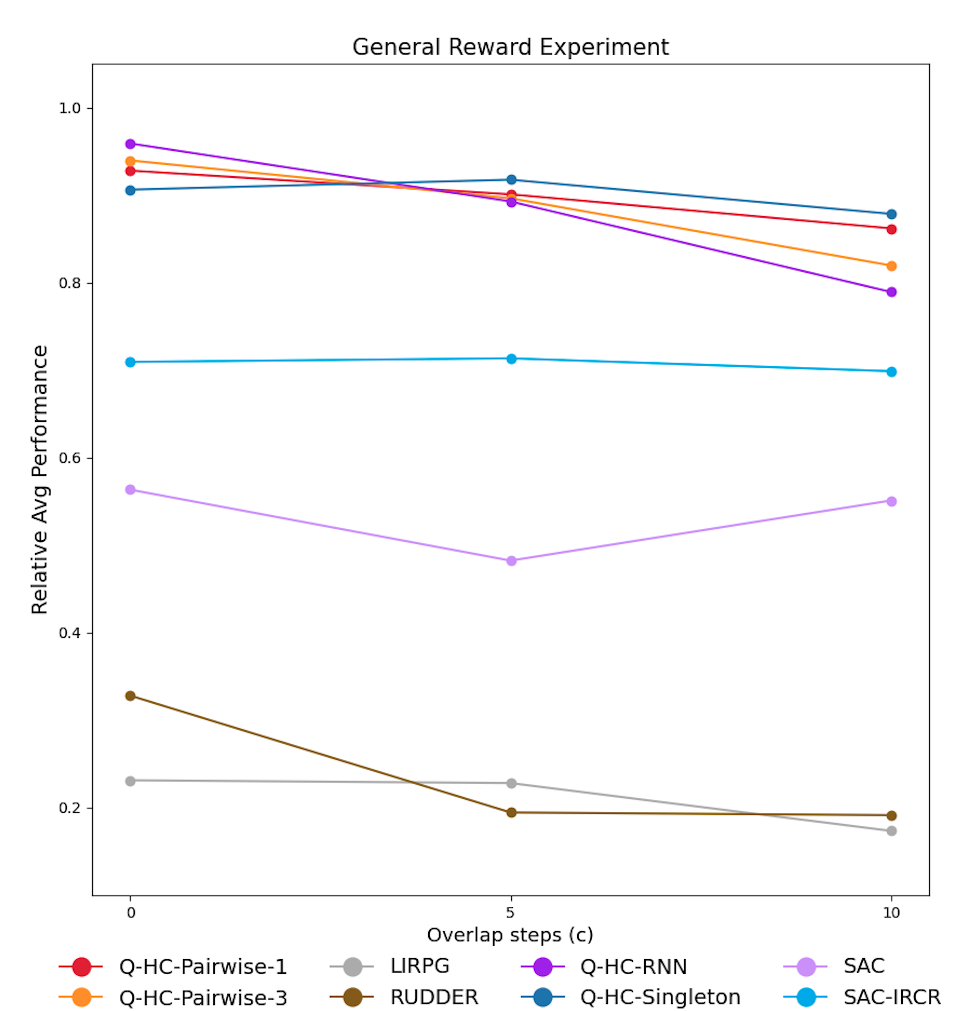}}
\caption{Relative Average Performance of our algorithms and the baselines on \cond-\mdp-c tasks with General Reward Functions. Each dot represents the relative performance of the algorithm over the PI-DRMDP-c task (average over Reach-v2, Hopper-v2, HalfCheetah-v2, Walker2d-v2, Ant-v2).}
\label{fig:overall_c}
\end{center}
\vskip -0.3in
\end{figure*}

\begin{figure*}[h]
\begin{center}
\centerline{\includegraphics[width=\linewidth, height=3.6in]{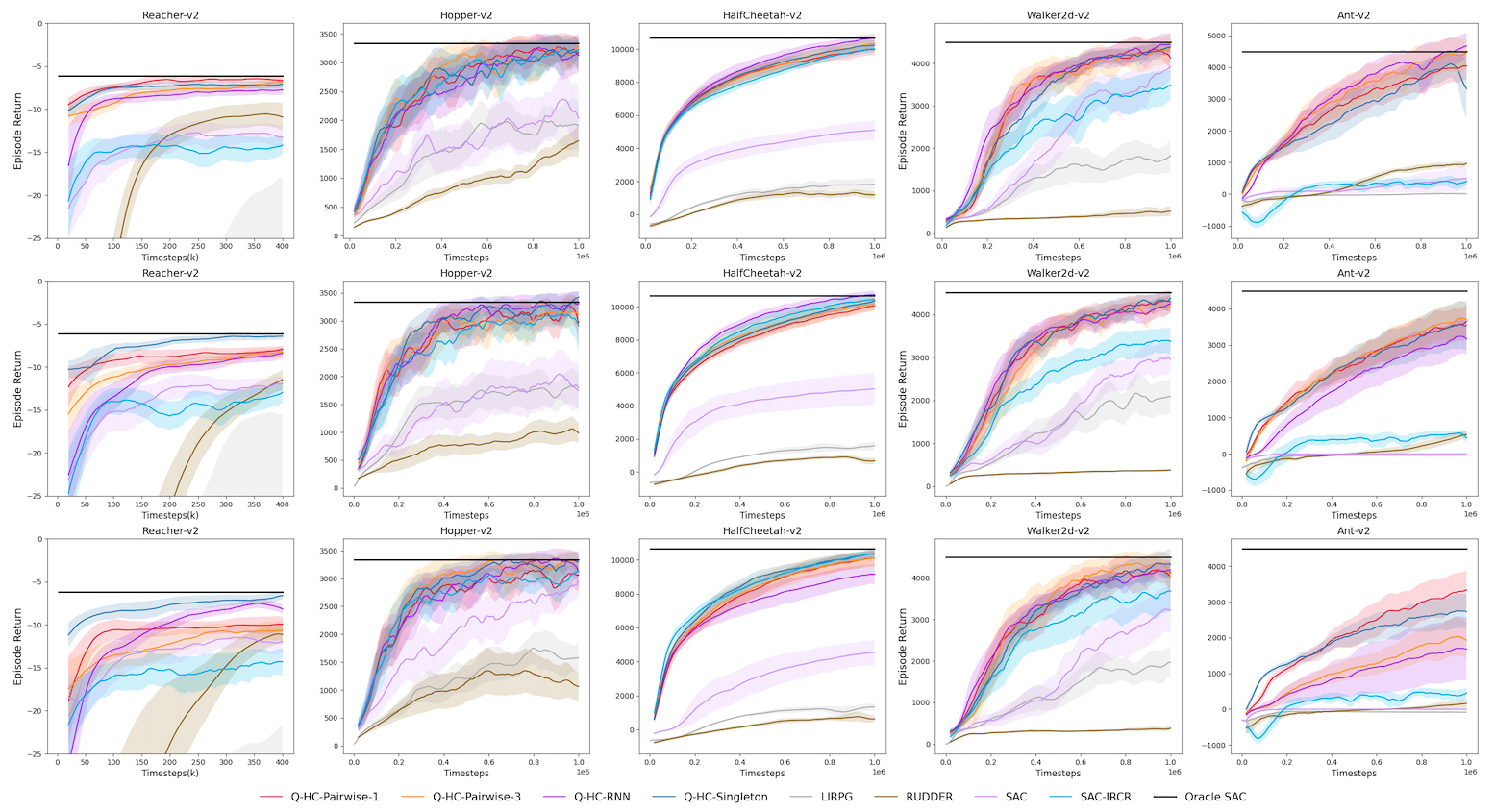}}
\caption{Learning curves of our algorithms $Q$-HC and several previous baselines. The first line refers to the task with $c=0$. The second line refers to the task with $c=5$. The third line refers to the task with $c=10$. All curves show the mean and one standard deviation of 6-7 seeds.}
\label{fig:all_c}
\end{center}
\vskip -0.3in
\end{figure*}

\begin{figure*}[h]
\begin{center}
\centerline{\includegraphics[width=2.0in, height=1.9in]{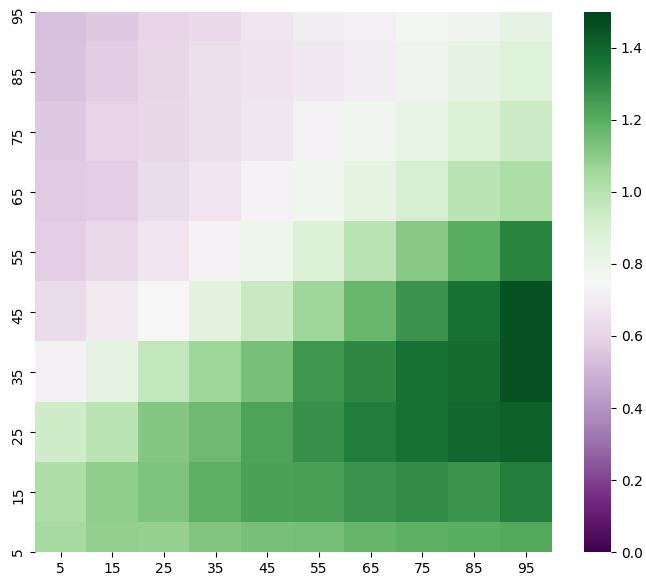}}
\caption{Visualization of $b_{\phi}(s_t, a_t)$ of $\gQ$-HC-Singleton without regulation term.}
\label{fig:heatmap_nc}
\end{center}
\vskip -0.3in
\end{figure*}

\clearpage
\section{Experiment Details} \label{Apx:imp}

\subsection{Implementation Details of Our Algorithm}

We re-implement SAC based on OpenAI baselines \citep{openai2017baselines}. The corresponding hyperparameters are shown in \Tableref{table:sac}. We use a smaller batch size due to limited computation power.
\begin{table}[htp]
\caption{Shared Hyperparameters with SAC}
\label{table:sac}
\vskip 0.15in
\begin{center}
\begin{small}
\begin{tabular}{lcc}
\toprule
Hyperparameter & SAC \\
\midrule
$\gQ_{\phi}, \pi_{\theta}$ architecture & 2 hidden-layer MLPs with 256 units each \\
non-linearity & ReLU  \\
batch size & 128 \\
discount factor $\gamma$ & 0.99 \\
optimizer & Adam \citep{Kingma2014Adam} \\
learning rate & $3 \times 10^{-4}$ \\
entropy target & -$|A|$ \\
target smoothing $\tau$ & $0.005$  \\
replay buffer & large enough for 1M samples \\
target update interval & 1  \\
gradient steps & 1   \\
\bottomrule
\end{tabular}
\end{small}
\end{center}
\vskip -0.1in
\end{table}

To convince that the superiority of our algorithms ($\gQ$-HC-RNN, $\gQ$-HC-Pairwise-K, $\gQ$-HC-Singleton) results from the efficient approximation framework (\Secref{sec:hc}) and the consistency with the theoretical analysis (\Secref{sec:methodtab}), rather than from a better choice of hyper-parameters, we did not tune any hyper-parameter shared with SAC (i.e., in \Tableref{table:sac}).

The architecture of different $H_{\phi}$s are shown as follows. For the regularization coefficient $\lambda$, we search $\lambda$ in $\{0.0, 0.05, 0.5, 5.0\}$ for each implementation of $\gQ$-HC. \emph{$\lambda$ is fixed for all tasks including the toy example.}

\begin{enumerate}[leftmargin=0.3in, itemsep=-3pt, topsep=-1pt, partopsep=-2pt]
    \item[-] $\gQ$-HC-RNN. Each step's input $\tau_{t:t+1}$ is first passed through a fully connected network with 48 hidden units and then fed into the GRU with a 48-unit hidden state. $H_{\phi}(\tau_{t_i:t})$ is the output of the GRU at the corresponding step. We use $\lambda=5.0$.
    \item[-] $\gQ$-HC-Pairwise-1. Both $c^0_{\phi}$ and $c^1_{\phi}$ are two-layer MLPs with 64 hidden units each. $\lambda=0.5$.
    \item[-] $\gQ$-HC-Pairwise-3. All $c^i_{\phi}, i=0, 1, 2, 3$ are two-layer MLP with 48 hidden units each. $\lambda=5.0$.
    \item[-] $\gQ$-HC-Singleton. $b_{\phi}$ is a two-layer MLPs with 64 hidden units each. $\lambda=0.05$.
\end{enumerate}

As suggested by the theoretical analysis, we augment the normalized $t - t_i$ to the state input.  

All high dimensional experiments are based on OpenAI Gym \citep{greg2016openai} with MuJoCo200. All experiments are trained on GeForce GTX 1080 Ti and Intel(R) Xeon(R) CPU E5-2630 v4 @ 2.20GHz. Each single run can be completed within 36 hours.

\subsection{Implementation Details of Other Baselines}

\textbf{$\gQ$-RNN}. We use the same shared hyper-parameters as in \Tableref{table:sac}. The only exception is the architecture of the GRU critic. Similar to $\gQ$-HC-RNN, each state-action pair is first passed through a fully connected layer with 128 hidden units and ReLU activation. Then it is fed into the GRU network with 128-unit hidden state. The $\gQ_{\phi}(\tau_{t_i:t+1})$ is the output of the GRU network at the corresponding step. We choose the above architecture to ensure that the number of parameters is roughly the same with our algorithm for fair comparison. The learning rate for the critic is still $3 \times 10^{-4}$ after fine-tuning.

\textbf{SAC-IRCR}. Iterative Relative Credit Refinement \citep{Tanmay2020ircr} is implemented on episodic reward tasks. In the delayed reward setting, we replace the smoothing value (\ie, episode return) with the reward of the reward interval. We find this performs better than using the episode return. 

\subsection{Details for Toy Example}

\textbf{Point Reach.} As illustrated in \Figref{fig:point}, the Point Reach \cond-\mdp~task consists of a 100x100 grid, the initial position at the bottom left corner, a 10x10 target area adjacent to the middle of the right edge and a point agent. The observation for the point agent is its $(x,y)$ coordinate only. The action space is its moving speed $(v_x, v_y) \in [-1, 1]^2$ along the two directions. Since the agent can not observe the goal directly, it has to infer it from the reward signal. To relieve the agent from heavy exploration (not our focus), the delayed rewards provide some additional information to the agent as follows.

The grid is divided into 10 sub-areas of 10x100 along the x-axis. We denote the sub-areas (highlighted by one color each in \Figref{fig:maze}) as $S_i, i=[0:10]$ form left to right. In each sub-area $S_i$, it provides an extra reward and the reward $r^i$ indicates how far is the sub-area to the target (\ie, $r^i = i$). Clearly, the bigger the reward is, the closer the sub-area is to the target area. Besides, reward interval length is fixed as $20$ and the point agent is rewarded with the maximal $r^i$ it has visited in the interval together with a bonus after reaching the target and a punishment for not reaching. Namely,
\begin{align*}
r(\tau_{t:t+n}) = \max_{j \in [t:t+n]} \sum_{i=0}^9 r^i \cdot \bm 1(s_{j:j+1} \in S_i) + 10 \cdot (\bm 1(\textit{reach target}) - 1)
\end{align*}
The task ends once the agent reach the target area and also terminates after 500 steps. Clearly. the optimal policy is to go straightly from the initial point to the target area without hesitation. The shortest path takes roughly 95 steps. The learning curves in \Figref{fig:point} show the mean and half of a standard deviation of 10 seeds. All curves are smoothed equally for clarity.

\textbf{Heatmap.} In \Figref{fig:heatmap}, the heatmap visualizes the value of $b_{\phi}(s_t, a_t)$ on the whole grid. We select the $b_{\phi}$s after the training has converged to the optimal policy. To be specific of the visualization, for each of the 10x10 cells on the grid, $s_t$ in $b_{\phi}(s_t, a_t)$ is selected as the center of the cell and $a_t$ is sampled from $\pi(\cdot|s_t, 0)$. Additionally, in \Figref{fig:heatmap_nc}, we visualize $b_{\phi}(s_t, a_t)$ similarly of $\gQ$-HC-Singleton without $L_{\text{reg}}$. Clearly, \Figref{fig:heatmap_nc} shows the similar pattern as \Figref{fig:heatmap}.

\subsection{Details for Variance Estimation}

The experiment in \Figref{fig:grad} is conducted in the following manner. First, we train a policy $\pi_{\theta}$ with $\gQ$-HC-Singleton and collect all the samples along the training to the replay buffer $D$. Second, two additional critics are trained concurrently with samples uniformly sampled from $D$. One critic uses the GRU architecture (i.e., $\gQ$-RNN) and the other uses $\gQ$-HC-Singleton structure. Most importantly, since these two critics are not used for policy updates, there is no overestimation bias \citep{fujimoto2018td3}. Thus, instead of using the CDQ method \cite{fujimoto2018td3}, these two critics are trained via the method in DDPG \citep{silver2014dpg}. With a mini-batch from $D$, we compute the gradients on the policy's parameters $\theta$ in \Eqref{equ:pi_tau} and \Eqref{equ:pi_bq} from the two extra critics respectively. Noticing that these gradients are not used in the policy training.

Then, we compute the sample variance of the gradient on each parameter $\theta_i$ in the policy network. The statistics (y-axis in \Figref{fig:grad}) is the sum of variance over all parameters in the final layer (\ie, the layer that outputs the actions). The statistics is further averaged over the whole training and scaled uniformly for clarity. 

\subsection{Details for Relative Average Performance}
Here, we formally introduce the Relative Average Performance (\textit{RAP}) metric in \Figref{fig:base} and \Figref{fig:overall_c}. The relative average performance of algorithm $A$ on environment $C$ (with maximal overlapping steps $c$) is defined as the average of $\text{RAP}$ over tasks in $\{ \text{Reach-v2}, \text{Hopper-v2}, \text{HalfCheetah-v2}, \text{Walker2d-v2}, \text{Ant-v2} \}$. $\text{RAP}$ of each task is the episodic return after 1M steps training normalized by the episodic return of Oracle SAC at 1M steps (trained on dense reward environment). The only exception is Reach-v2, in which the episodic returns are added by $50$ to before the normalization.


\end{document}